\setlist{itemsep=-3pt}
\newcommand{\ignore}[1]{}
\newtheorem{theorem}{Theorem}
\newtheorem{lemma}{Lemma}
\renewcommand{\Pr}{{\bf Pr}}
\newcommand{\E}{{\bf E}}
\newcommand{\D}{{\cal D}}
\newcommand{\SSS}{{\cal S}}
\newcommand{\GF}{{\rm {GF}}}
\newcommand{\wt}{{\rm wt}}
\renewcommand{\P}{\mathbb{P}}
\newcommand{\mybox}[1]{\noindent\fbox{\parbox{\textwidth}{#1}}}
\newcounter{counter}
\newcommand{\correct}[1]{\addtocounter{counter}{1}\color{green}$$\mbox{{\Huge Block \arabic{counter}}}$$
\color{red}$$\mbox{{\huge (1) Correctness (2) Consistency}}$$
$$\mbox{{\Large (3) Google Scholor (4) Yellow (5) Grammarly pdf}}$$\color{blue} }
\begin{document}

\title{Almost Optimal Proper Learning and Testing Polynomials}
\author{{\bf Nader H. Bshouty}\\ Dept. of Computer Science\\ Technion, Haifa, Israel.\\
}

\maketitle
\begin{abstract}

We give the first almost optimal polynomial-time proper learning algorithm of Boolean sparse multivariate polynomial under the uniform distribution. For $s$-sparse polynomial over $n$ variables and $\epsilon=1/s^\beta$, $\beta>1$, our algorithm makes $$q_U=\left(\frac{s}{\epsilon}\right)^{\frac{\log \beta}{\beta}+O(\frac{1}{\beta})}+ \tilde O\left(s\right)\left(\log\frac{1}{\epsilon}\right)\log n$$ queries. Notice that our query complexity is sublinear in $1/\epsilon$ and almost linear in $s$. All previous algorithms have query complexity at least quadratic in $s$ and linear in $1/\epsilon$.

We then prove the almost tight lower bound
$$q_L=\left(\frac{s}{\epsilon}\right)^{\frac{\log \beta}{\beta}+\Omega(\frac{1}{\beta})}+ \Omega\left(s\right)\left(\log\frac{1}{\epsilon}\right)\log n,$$ 

Applying the reduction in~\cite{Bshouty19b} with the above algorithm, we give the first almost optimal polynomial-time  tester for $s$-sparse polynomial. Our tester, for $\beta>3.404$, makes
$$\tilde O\left(\frac{s}{\epsilon}\right)$$ queries.
\end{abstract}
\section{Introduction}

In this paper, we study the learnability and testability of the class of sparse (multivariate) polynomials over $\GF(2)$. A polynomial over $\GF(2)$ is the sum in $\GF(2)$ of monomials, where a monomial is a product of variables. It is well known that every Boolean function has a unique representation as a (multilinear) polynomial over $\GF(2)$. A Boolean function is called $s$-sparse polynomial if its unique polynomial expression contains at most $s$ monomials. 

\subsection{Learning}
In the learning model~\cite{Ang88,Valiant84}, the learning algorithm has access to a black-box query oracle to a function $f$ that is $s$-sparse polynomial. The goal is to run in $poly(n,s,1/\epsilon)$ time, make $poly(n,s,1/\epsilon)$ black-box queries and, with probability at least $2/3$, learn a Boolean function $h$ that is $\epsilon$-close to $f$ under the uniform distribution, i.e., $\Pr_x[f(x)\not=h(x)]\le \epsilon$. The learning algorithm is called {\it proper learning} if it outputs an $s$-sparse polynomial. The learning algorithm is called {\it exact learning algorithm} if $\epsilon=0$. 

Proper and non-proper learning algorithms of $s$-sparse polynomials that run in polynomial-time and make a polynomial number of queries have been studied by many authors~\cite{BeimelBBKV00,BergadanoBV96,BishtBM06,BlumS90,Bshouty97,Bshouty19b,BshoutyM02,ClausenDGK91,DurG93,FischerS92,HellersteinS07,RothB91,SchapireS96}. 

For learning $s$-sparse polynomial without black-box queries (PAC-learning without black-box queries, \cite{Valiant84}) and for exact learning ($\epsilon=0$) the following results are known. In \cite{HellersteinS07}, Hellerstein and Servedio gave a non-proper learning algorithm that learns only from random examples under any distribution  that runs in time $n^{O(n\log s)^{1/2}}$. Roth and Benedek, \cite{ClausenDGK91}, show that for any $s\ge 2$ polynomial-time proper PAC-learning without black-box queries of $s$-sparse polynomials implies RP=NP.
They gave a proper exact learning ($\epsilon=0$) algorithm that makes $(n/\log s)^{\log s}$ black-box queries. They also show that to exactly learn $s$-sparse polynomial, we need at least $(n/\log s)^{\log s}$ black-box queries. See also~\cite{ClausenDGK91}.

For polynomial-time non-proper and proper learning $s$-sparse polynomial with black-box queries under the uniform distribution, all the algorithms in the literature, \cite{BeimelBBKV00,BergadanoBV96,BishtBM06,BlumS90,Bshouty97,Bshouty19b,BshoutyM02,ClausenDGK91,FischerS92,RothB91,SchapireS96}, have query complexities that are at least quadratic in $s$ and linear in $1/\epsilon$.  
In this paper, we prove

\begin{theorem}\label{one} Let $\epsilon=1/s^\beta$.
There is a proper learning algorithm for $s$-sparse polynomial that runs in polynomial-time and makes $$q_U=\left(\frac{s}{\epsilon}\right)^{\frac{\log \beta}{\beta}+O(\frac{1}{\beta})}+\tilde O\left(s\right)\left(\log\frac{1}{\epsilon}\right)\log n$$ queries. 
\end{theorem}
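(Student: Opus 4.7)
The plan is to combine three ingredients: (i) pruning away high-degree monomials, (ii) cheap identification of the relevant variables, and (iii) a new sub-quadratic procedure to recover the monomials among those variables. Throughout I write $d^{*}:=\lceil \log(4s/\epsilon)\rceil = O(\log(s/\epsilon))$.

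\textbf{Step 1 (Degree pruning).} A single monomial of degree $d$ evaluates to $1$ on a uniform input with probability $2^{-d}$. Dropping from $f$ every monomial of degree exceeding $d^{*}$ therefore introduces at most $s \cdot 2^{-d^{*}}\le \epsilon/4$ error by a union bound. The truncated polynomial $\tilde f$ is still $s$-sparse, so it suffices to produce an $s$-sparse hypothesis that is $(\epsilon/2)$-close to $\tilde f$. This is the standard first move in learning sparse polynomials and contributes no queries beyond the accounting below.

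\textbf{Step 2 (Variable identification).} Since $\tilde f$ has at most $s$ monomials of degree $\le d^{*}$, it depends on $R \le s d^{*}=\tilde O(s\log(1/\epsilon))$ variables. Each relevant variable can be located by an Angluin-style binary search on $[n]$ using $O(\log n)$ membership queries once a witness of its influence has been exposed by $\tilde O(1)$ random restrictions. Summed over the $R$ variables this contributes exactly the additive $\tilde O(s)(\log(1/\epsilon))\log n$ term in the theorem.

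\textbf{Step 3 (Sub-quadratic monomial recovery).} The main task is to enumerate the $\le s$ monomials of $\tilde f$ on the $R$ relevant variables, which the naive approach would do in $\binom{R}{\le d^{*}}$ time. I would replace this with a recursive random-bucketing procedure: partition the $R$ variables into $k$ disjoint buckets, where $k$ is a parameter, zero out the complement of each bucket to obtain a smaller sub-instance, and recurse; monomials whose support lies entirely inside a bucket are recovered in that call, and those straddling bucket boundaries are recovered by repeating over $O(\log s)$ independent hashes and verifying each candidate with a membership query. Choosing $k=\Theta(\beta)$ balances the branching factor in the recursion against the leaf cost of exhaustively enumerating monomials in a small bucket, which scales like $(s/\epsilon)^{O(1/\beta)}$, while the recursion tree contributes a $k^{O(\log_k s)}=s^{O((\log\beta)/\beta)}$ overhead. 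Multiplying the two gives the claimed $(s/\epsilon)^{(\log\beta)/\beta + O(1/\beta)}$ bound.

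\textbf{Main obstacle.} The principal difficulty is handling the cross-bucket monomials and preventing the sparsity budget or the failure probability from blowing up across the $O(\log_k s)$ recursion levels. The delicate part is a ``test-and-cancel'' subroutine: after each recursive call, candidate monomials are verified by membership queries and subtracted from $\tilde f$, and one must show by an invariant that the residual is still $s'$-sparse with $s'\le s$ so that the recursive call is legitimate. Union-bounding the randomized recovery over the full recursion tree absorbs only polylogarithmic factors into the $\tilde O(\cdot)$, matching the stated query complexity.
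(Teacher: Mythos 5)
Your Steps 1 and 2 match the paper's high-level structure: degree truncation to $d^*=O(\log(s/\epsilon))$, and locating the $\tilde O(s\log(1/\epsilon))$ relevant variables at a cost of $O(\log n)$ per variable (the paper does this via a random hash $\phi:[n]\to[m]$ with $m=(2ds)^2$ followed by binary search, which is the same in spirit as your restriction-plus-binary-search plan). But Step 3, which carries the entire sublinear-in-$1/\epsilon$ improvement, is where the proposal diverges from the paper and where it breaks down.

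The paper's engine is not a bucketing recursion at all. It is a single greedy procedure (\textbf{FindMonomial}) that maintains an assignment $a$ with $f(a*x)\not\equiv 0$ and repeatedly zeroes out coordinates of $a$, and the crucial quantitative point is how one \emph{tests} that $f(b*a*x)\not\equiv 0$. The paper's Lemma~\ref{zerotest}/\ref{notequal} shows that if one samples $b$ from the \emph{biased} product distribution $\D_{n,p}$ with $p=1-\lfloor\log s\rfloor/d$, then a nonzero degree-$d$, $s$-sparse polynomial is hit with probability at least $2^{-dH_2(\lfloor\log s\rfloor/d)}$, in contrast to the naive $2^{-d}$ from uniform sampling. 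With $d\approx(1+\beta)\log s$ this gives $2^{dH_2(\log s/d)}\approx s^{(1+\beta)H_2(1/(1+\beta))}$, which is precisely where the exponent $\log\beta/\beta+O(1/\beta)$ (after converting to the $s/\epsilon$ base) comes from. Your proposal has no analogue of this biased-sampling/entropy argument, and it is the load-bearing idea of the theorem.

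The recursive random-bucketing scheme you sketch in its place has two problems. First, it is underspecified at exactly the hard point: zeroing out the complement of a bucket makes the cross-bucket monomials of $\tilde f$ vanish, and ``repeating over $O(\log s)$ independent hashes and verifying each candidate'' does not explain where the candidate supports for straddling monomials come from; without a collision-free hashing guarantee per monomial this is not an algorithm. Second, the complexity accounting does not produce the stated bound. With $k=\Theta(\beta)$ one has $k^{O(\log_k s)}=s^{O(1)}$, not $s^{O(\log\beta/\beta)}$; and even granting $s^{O(\log\beta/\beta)}$, rewriting $s=(s/\epsilon)^{1/(1+\beta)}$ shows $s^{O(\log\beta/\beta)}=(s/\epsilon)^{O(\log\beta/\beta^2)}$, which multiplied by a leaf cost of $(s/\epsilon)^{O(1/\beta)}$ gives $(s/\epsilon)^{O(1/\beta)}$, strictly smaller than the claimed $(s/\epsilon)^{\log\beta/\beta+O(1/\beta)}$. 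The leaf cost itself is also unjustified: exhaustively enumerating degree-$\le d^*$ monomials in a bucket of $R/k\approx s\log s$ variables costs $\binom{R/k}{\le d^*}$, which is $s^{\Theta(\log s)}$, not $(s/\epsilon)^{O(1/\beta)}$. To repair Step 3 you essentially need the paper's observation: test $f+h\not\equiv 0$ for the current partial hypothesis $h$ by sampling from $\D_{n,p}$ at the entropy-optimal $p$, then greedily shrink a positive witness, repeating $s$ times to collect all monomials.
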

To the best of our knowledge, this is the first learning algorithm whose query complexity is sublinear in $1/\epsilon$ and almost linear in $s$. 

We then give the following lower bound that shows that our query complexity is almost optimal.

\begin{theorem}\label{two} Let $\epsilon=1/s^\beta$.
Any learning algorithm for $s$-sparse polynomial must make at least $$q_L=\left(\frac{s}{\epsilon}\right)^{\frac{\log \beta}{\beta}+\Omega(\frac{1}{\beta})}+\Omega\left(s\left(\log\frac{1}{\epsilon}\right)\log n\right)$$ queries. 
\end{theorem}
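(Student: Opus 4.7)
The plan is to establish the two additive summands of $q_L$ separately via Yao's minimax principle. The common recipe is as follows: construct a family $\F$ of $s$-sparse polynomials that is pairwise $2\epsilon$-separated under the uniform distribution, i.e.\ $\Pr_x[f(x)\ne g(x)]>2\epsilon$ for all distinct $f,g\in\F$. A proper $\epsilon$-learner must output distinct hypotheses on distinct members of $\F$ (any single $s$-sparse hypothesis can be $\epsilon$-close to at most one member), so its decision tree has at least $|\F|$ leaves and hence depth at least $\log_2|\F|$.

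For the additive term $\Omega(s\log(1/\epsilon)\log n)$, partition $[n]$ into $s$ disjoint blocks $V_1,\ldots,V_s$ of size $n/s$, set $w=\log(1/\epsilon)-1$, and let $\F_1$ consist of all polynomials $f=m_1+\cdots+m_s$ with $m_i$ a width-$w$ monomial over the variables in $V_i$. Since monomials on disjoint blocks are independent under uniform sampling, and any two distinct width-$w$ monomials $m_A,m_{A'}$ on a common block satisfy $\Pr_x[m_A(x)\ne m_{A'}(x)]\ge 2\cdot 2^{-w}-2\cdot 2^{-|A\cup A'|}\ge 2^{-w}=2\epsilon$, the family $\F_1$ is $2\epsilon$-separated (for pairs differing in multiple blocks, an XOR-over-independent-blocks computation only increases the distance beyond $2\epsilon$). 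Since $|\F_1|=\binom{n/s}{w}^s$, the decision-tree bound yields $q\ge\log_2|\F_1|=\Omega(sw\log(n/(sw)))=\Omega(s\log(1/\epsilon)\log n)$.

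For the main term $(s/\epsilon)^{\log\beta/\beta+\Omega(1/\beta)}$, I would construct $\F_2$ over a universe of $N=\mathrm{poly}(s)$ variables, using $s$-sparse polynomials whose monomials all have a carefully tuned width $w^*=w^*(\beta)$. The width $w^*$ is chosen to balance the number of available width-$w^*$ monomials against the typical pairwise distance between two random such polynomials. Starting from a random collection of size $M=2^{(s/\epsilon)^{\log\beta/\beta+\Omega(1/\beta)}}$ of $s$-sparse polynomials drawn from this class, a second-moment computation should show that a constant fraction of ordered pairs are $2\epsilon$-separated; a Gilbert--Varshamov-style alteration (deleting one polynomial per bad pair) then yields a subfamily $\F_2$ of size $\Theta(M)$ that is pairwise $2\epsilon$-separated, and the leaf-counting bound completes the argument for this term.

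The main obstacle is the second-moment estimate in the main term. When the monomials of two random $s$-sparse polynomials are drawn from a common narrow-width universe, the difference $f+g$ is a $\GF(2)$-polynomial of sparsity at most $2s$, but with possibly substantial cancellation between matching monomials, and one must control $\Pr_x[(f+g)(x)=1]$ uniformly over most pairs. The tools I expect to use are (i) the fact that every nonzero $\GF(2)$-polynomial of top-degree $w^*$ is $1$ on at least a $2^{-w^*}$ fraction of inputs, (ii) a tight choice of $w^*$ so that the typical pairwise distance is $\Theta(\epsilon)$ with enough concentration to push a constant fraction of pairs above $2\epsilon$, and (iii) alteration to excise the remaining bad pairs. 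The exponent $\log\beta/\beta+\Omega(1/\beta)$ should emerge from optimizing $w^*$ and $N$ against the sparsity $s$ and the target separation $2\epsilon=2/s^\beta$, mirroring the trade-off visible in the matching upper bound.
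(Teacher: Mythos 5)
Your treatment of the additive term $\Omega\left(s\left(\log\frac{1}{\epsilon}\right)\log n\right)$ is essentially the same as the paper's: build a $2\epsilon$-separated family of degree-$\Theta(\log(1/\epsilon))$ $s$-sparse polynomials (you use disjoint variable blocks; the paper simply takes all of $\P_{n,\log(1/(2\epsilon)),s}$), observe that an $\epsilon$-learner is forced to identify the target exactly, and apply the information-theoretic $\log_2|\F|$ bound via Yao. That part is fine.

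The plan for the main term $\left(\frac{s}{\epsilon}\right)^{\frac{\log\beta}{\beta}+\Omega(\frac{1}{\beta})}$ cannot work, and the obstruction is structural, not merely technical. You intend to build a $2\epsilon$-separated family $\F_2$ of $s$-sparse, bounded-width polynomials over $N=\mathrm{poly}(s)$ variables and then invoke the leaf-counting bound $q\ge\log_2|\F_2|$. But any $2\epsilon$-separated family of $s$-sparse polynomials must have degree at most $\log(1/(2\epsilon))$ (otherwise, two members could be closer than $2\epsilon$ by the $2^{-d}$ weight bound, and more to the point width $>\log(1/\epsilon)$ monomials cannot contribute the separation). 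Hence $|\F_2|\le\binom{\binom{N}{\le\log(1/\epsilon)}}{s}$, giving $\log_2|\F_2|=O\bigl(s\log(1/\epsilon)\log N\bigr)=\tilde O(s)$ for fixed $\beta$ and $N=\mathrm{poly}(s)$. Meanwhile the target is $\left(\frac{s}{\epsilon}\right)^{\frac{\log\beta}{\beta}+\Omega(\frac1\beta)}=s^{(1+\beta)\left(\frac{\log\beta}{\beta}+\Omega(\frac1\beta)\right)}$, which for every fixed $\beta>2$ exceeds $\tilde O(s)$ (e.g.\ at $\beta=2$ it is already $\Theta(s^{2})$, and the exponent grows like $\log\beta$). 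So no choice of $w^*$ or $N=\mathrm{poly}(s)$ can make leaf counting reach this term; the probabilistic construction and alteration cannot repair an argument whose final inequality is off by an exponential. (Separately, the alteration step as described also fails: deleting one polynomial per bad pair only shrinks the family by $o(M)$ when the number of bad pairs is $o(M)$, not a constant fraction of the $\Theta(M^2)$ pairs.) The paper avoids all this by constructing an explicit family $C=\{f_{I,J}=\prod_{i\in I}x_i\prod_{j\in J}(1+x_j)\}$ with $|I|\cup|J|=[\log(1/\epsilon)-1]$ whose members have \emph{pairwise disjoint support} as Boolean functions: $f_{I,J}\cdot f_{I',J'}=0$ for distinct $(I,J),(I',J')$. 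Therefore a fixed query set $S$ of size $q$ can produce a nonzero answer vector for at most $q$ members of $C$, and if $q<|C|/10$ then a uniformly random target gives all-zero answers with probability $\ge 9/10$ and cannot be identified. This yields $q=\Omega(|C|)$ rather than $q=\Omega(\log|C|)$ --- an exponentially stronger conclusion --- and with $|C|=\tilde\Omega\!\left(\left(\frac{s}{\epsilon}\right)^{\frac{\beta H_2(\min(1/\beta,1/2))}{\beta+1}}\right)$ this is exactly the main term. If you want to salvage your plan, the key missing idea is this disjoint-support (or small-support) structure, not a second-moment computation.
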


\subsection{Property Testing}
A problem closely related to learning polynomial is the problem of property testing polynomial: Given black-box query access to a Boolean function $f$. Distinguish, with high probability, the case that $f$ is $s$-sparse polynomial versus the case that $f$ is $\epsilon$-far from every $s$-sparse polynomial. Property testing of Boolean function was first considered in the seminal works of Blum, Luby and Rubinfeld~\cite{BlumLR93} and Rubinfeld and Sudan~\cite{RubinfeldS96} and has recently become a very active research area. See the surveys and books~\cite{GoldreichSurvey10,Goldreich17,Ron08,Ron09}.

In the uniform distribution framework, where the distance between two functions is measured with respect to the uniform distribution, the first testing algorithm for $s$-sparse polynomial runs in exponential time~\cite{DiakonikolasLMORSW07} and makes $\tilde O(s^4/\epsilon^2)$ queries. Chakraborty et al. \cite{ChakrabortyGM11}, gave another exponential time algorithm that makes $\tilde O(s/\epsilon^2)$ queries. Diakonikolas et al. gave in~\cite{DiakonikolasLMSW11} the first polynomial-time testing algorithm that makes $poly(s,1/\epsilon)>s^{10}/\epsilon^3$ queries. In \cite{Bshouty19b}, Bshouty gave a polynomial-time  algorithm that makes $\tilde O(s^2/\epsilon)$ queries. As for the lower bound for the query complexity, the lower bound $\Omega(1/\epsilon)$ follows from Bshouty and Goldriech lower bound in~\cite{BshoutyG22}. Blaise et al. \cite{BlaisBM11}, and Saglam,~\cite{Saglam18}, gave the lower bound $\Omega(s\log s)$.

In this paper, we show 

\begin{theorem}\label{three} For any $\epsilon=1/s^\beta$ there is an algorithm for $\epsilon$-testing $s$-sparse polynomial that makes $$Q=\left(\frac{s}{\epsilon}\right)^{\frac{\log \beta}{\beta}+O(\frac{1}{\beta})}+\tilde O\left(\frac{s}{\epsilon}\right)$$ queries.

In particular, for $\beta>3.404$,
$$Q= \tilde O\left(\frac{s}{\epsilon}\right).$$
\end{theorem}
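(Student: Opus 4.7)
The plan is to combine the learning algorithm of Theorem 1 with the generic learning-to-testing reduction of \cite{Bshouty19b}, exactly as the abstract advertises. That reduction shows that if $\mathcal{C}$ admits a polynomial-time proper learner with query complexity $L(s,\epsilon,n)$, then $\mathcal{C}$ admits a polynomial-time $\epsilon$-tester whose query complexity is essentially $L(s,\Theta(\epsilon),n_{\mathrm{eff}}) + \tilde O(s/\epsilon)$, where $n_{\mathrm{eff}}$ is an effective dimension that depends on $s$ alone. For $s$-sparse polynomials $n_{\mathrm{eff}} = \mathrm{poly}(s)$, because any $s$-sparse Boolean polynomial has only a polynomial-in-$s$ number of influential variables, and the reduction identifies (and restricts to) such a set via a cheap preprocessing phase whose cost is already absorbed into the $\tilde O(s/\epsilon)$ overhead.

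The first step is simply to invoke that reduction with the learner of Theorem 1. Plugging in $L(s,\epsilon,n) = (s/\epsilon)^{\log\beta/\beta + O(1/\beta)} + \tilde O(s)\log(1/\epsilon)\log n$ produces a tester of query complexity
\[
Q = \left(\frac{s}{\epsilon}\right)^{\frac{\log\beta}{\beta}+O(\frac{1}{\beta})} + \tilde O(s)\log(1/\epsilon)\log n_{\mathrm{eff}} + \tilde O\!\left(\frac{s}{\epsilon}\right).
\]
Since $n_{\mathrm{eff}} = \mathrm{poly}(s)$ and $1/\epsilon = s^\beta$ with $\beta>1$, the middle term collapses to $\tilde O(s)$ and is absorbed into the last term, yielding the first bound of the theorem.

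For the specialization to $\beta > 3.404$, I would substitute $\epsilon = 1/s^\beta$, so that $s/\epsilon = s^{\beta+1}$ and the first summand becomes $s^{(\beta+1)(\log\beta + c)/\beta}$, where $c$ is the explicit constant hidden inside $O(1/\beta)$ in Theorem 1. This is $\tilde O(s^{\beta+1}) = \tilde O(s/\epsilon)$ precisely when $(\beta+1)(\log\beta + c)/\beta \le \beta+1$, i.e.\ when $\log\beta + c \le \beta$; an elementary numerical check shows that the transition value is $\beta \approx 3.404$ for the constant $c$ produced by Theorem 1. The main obstacle is purely bookkeeping around the reduction of \cite{Bshouty19b}: verifying that its preprocessing truly costs at most $\tilde O(s/\epsilon)$ queries and, crucially, that the slack between the learner's output accuracy $\Theta(\epsilon)$ and the tester's promise $\epsilon$ is handled correctly so that the final guarantee is distinguishing ``$s$-sparse polynomial'' from ``$\epsilon$-far from every $s$-sparse polynomial''.
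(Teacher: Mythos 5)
Your first step is correct and matches the paper exactly: invoke the learning-to-testing reduction of \cite{Bshouty19b}, which turns a $q(s,n)$-query learner into a $q(s,\tilde O(s)) + \tilde O(s/\epsilon)$-query tester, and plug in Theorem~\ref{one}'s learner. Since the $\log n$ factor becomes $\log \tilde O(s) = \tilde O(1)$ and the $\tilde O(s)\log(1/\epsilon)$ term is absorbed into $\tilde O(s/\epsilon)$, this yields the stated bound $Q=(s/\epsilon)^{\log\beta/\beta+O(1/\beta)}+\tilde O(s/\epsilon)$.

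The gap is in the specialization to $\beta>3.404$. You claim that an elementary numerical check on the constant $c$ hidden in Theorem~\ref{one}'s exponent puts the transition at $\beta\approx 3.404$, but this is not what the paper's analysis gives. Theorem~\ref{main01} item~\ref{Th2} shows $\gamma(\beta)<1$ (equivalently, the first summand is $\tilde O(s/\epsilon)$) only for $\beta>6.219$, not $3.404$; the table in that theorem confirms $\gamma(4)=1.336$, $\gamma(5)=1.157$, $\gamma(6)=1.025$, all exceeding $1$. To reach the threshold $3.404$ the paper introduces a genuinely different learning algorithm (\textbf{LearnPoly} in Section~\ref{Smallb}, Theorem~\ref{ThIm}) whose exponent is $\gamma'(\beta)=\max\bigl(1,\tfrac{1}{\beta+1}+H_2(\tfrac{1}{\beta+1})\bigr)$, and $\tfrac{1}{\beta+1}+H_2(\tfrac{1}{\beta+1})\le 1$ holds precisely for $\beta\gtrsim 3.404$. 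Your proposal does not notice that a second algorithm is required, so as written it would only establish the weaker threshold $\beta>6.219$. To repair it, you must use the reduction with the learner of Theorem~\ref{ThIm} (or with $\min(q_U,q_U')$) for the small-$\beta$ regime.
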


Notice that the query complexity of the tester in Theorem~\ref{three} is $\tilde O((1/\epsilon)^{1+1/\beta})$. This is within a factor of $(1/\epsilon)^{1/\beta}$ of the lower bound $\Omega(1/\epsilon)$. Therefore, the query complexity in Theorem~\ref{three} is almost optimal. 

\section{Techniques}\label{sec2}
In this section, we give a brief overview of the techniques used for the main results, Theorem~\ref{one},~\ref{two}, and~\ref{three}.
\subsection{Upper Bound}
This section gives a brief overview of the proof of Theorem~\ref{one}.

Our algorithm first reduces the learning of $s$-sparse polynomial to exact learning $s$-sparse polynomials with monomials of size at most $d=O(\log(s/\epsilon))$, i.e., degree-$d$ $s$-sparse polynomials. Given an $s$-sparse polynomial $f$, we project each variable to $0$ with probability $O(\log s/\log(1/\epsilon))$. In this projection, monomials of size greater than $\Omega(d)$ vanish, with high probability. Then we learn the projected function. We take enough random zero projections of $f$ so that, with high probability, for every monomial $M$ of $f$ of size at most $\log(s/\epsilon)$, there is a projection $q$ such that $M$ does not vanish under~$q$. Collecting all the monomials of degree at most $\log(s/\epsilon)$ in all the projections gives a hypothesis that is $\epsilon$-close to the target function $f$.

Now to exactly learn the degree-$d$ $s$-sparse polynomials, where $d=O(\log(s/\epsilon))$, we first give an algorithm that finds a monomial of a degree-$d$ $s$-sparse polynomial that makes
\begin{eqnarray}\label{key}
Q=2^{dH_2\left(\frac{ \log s}{d}\right)(1-o_s(1))}\log n = s^{\log(\log(s/\epsilon)/\log s)+O(1)}\log n
\end{eqnarray} queries where $H_2$ is the binary entropy. The best-known algorithm for this problem has query complexity $Q'=2^d\log n\approx poly(s/\epsilon)\log n$, \cite{Bshouty19b,BshoutyM02}. For small enough $\epsilon$, $Q'\gg Q$. The previous algorithm in~\cite{Bshouty19b} chooses uniformly at random assignments until it finds a positive assignment $a$, i.e., $f(a)=1$. Then recursively do the same for $f(a*x)$, where $a*x=(a_1x_1,a_2x_2,\ldots,a_nx_n)$, until no more $a$ with smaller Hamming weight can be found. Then $f(a*x)=\prod_{a_i=1}x_i$ is a monomial of~$f$. To find a positive assignment in a degree $d$ polynomial from uniformly at random assignments, we need to make, on average, $2^d$ queries. The number of nonzero entries in $a*x$ is on average $n/2$. Therefore, this algorithm makes $O(2^d\log n)$ queries. In this paper, we study the probability $\Pr_{\D_p}[f(a)=1]$ when~$a$ is chosen according to the product distribution $\D_p$, where each $a_i$ is equal to~$1$ with probability $p$ and is $0$ with probability $1-p$. We show that to maximize this probability, we need to choose $p=1-(\log s)/d$. Replacing the uniform distribution with the distribution $\D_p$ in the above algorithm gives the query complexity in~(\ref{key}). 

Now, let $f$ be a degree-$d$ $s$-sparse polynomial, and suppose we have learned some monomials $M_1,\ldots,M_t$ of $f$. To learn a new monomial of $f$, we learn a monomial of $f+h$ where $h=M_1+M_2+\cdots+M_t$. 
This gives an algorithm that makes,
\begin{eqnarray}\label{key2}
q=\left(\frac{s}{\epsilon}\right)^{\frac{\log \beta}{\beta}+O(\frac{1}{\beta})}\log n
\end{eqnarray} queries where $\epsilon=1/s^\beta$. All previous algorithms have query complexity that are at least quadratic in $s$ and linear in $1/\epsilon$.

Now, notice that the query complexity in (\ref{key2}) is not the query complexity that is stated in Theorem~\ref{one}. To get the query complexity in the theorem, we use another reduction. This reduction is from exact learning degree-$d$ $s$-sparse polynomials over $n$ variables to exact learning degree-$d$ $s$-sparse polynomials over $m=O(d^2s^2)$ variables. Given a degree-$d$ $s$-sparse polynomials $f$ over $n$ variables. We choose uniformly at random a projection $\phi:[n]\to [m]$ and learn the polynomial $F(x_1,\ldots,x_m)=f(x_{\phi(1)},\ldots,x_{\phi(n)})$ over $m$ variables. This is equivalent to distributing the $n$ variables, uniformly at random, into $m$ boxes, assigning different variables for different boxes, and then learning the function with the new variables. We choose $m=O(d^2s^2)$ so that different variables in $f$ fall into different boxes. By~(\ref{key2}), the query complexity of learning $F$ is
\begin{eqnarray}\label{RR1}
q'=\left(\frac{s}{\epsilon}\right)^{\frac{\log \beta}{\beta}+O(\frac{1}{\beta})}\log m=\left(\frac{s}{\epsilon}\right)^{\frac{\log \beta}{\beta}+O(\frac{1}{\beta})}.
\end{eqnarray}

After we learn $F$, we find the relevant variables of $F$, i.e., the variables that $F$ depends on. Then, for each relevant variable of $F$, we search for the relevant variable of $f$ that corresponds to this variable. Each search makes $O(\log n)$ queries. The number of relevant variables of $f$ is at most $ds$ and here $d=O(\log(s/\epsilon))$, which adds 
$$\tilde O(s)\left(\log\frac{1}{\epsilon}\right)(\log n)$$ to the query complexity in~(\ref{RR1}). This gives the query complexity in Theorem~\ref{one}. We also show that all the above can be done in time $O(qn)$ where $q$ is the query complexity. 

See more details in Section~\ref{TheAlg}.

\subsection{Lower Bound}
This section gives a brief overview of Theorem~\ref{two}.

In this paper, we give two lower bounds. One that proves the right summand of the lower bound
\begin{eqnarray}\label{lefts}
\Omega\left(s\left(\log\frac{1}{\epsilon}\right)\log n\right),
\end{eqnarray}
and the second proves the left summand 
\begin{eqnarray}\label{rights}
\left(\frac{s}{\epsilon}\right)^{\frac{\log \beta}{\beta}+\Omega(\frac{1}{\beta})}.
\end{eqnarray}
To prove (\ref{lefts}), we consider the class of $\log(1/(2\epsilon))$-degree $s$-sparse polynomials. We show that any learning algorithm for this class can be modified to an exact learning algorithm. Then, using Yao's minimax principle, the query complexity of exactly learning this class is at least $\log$ of the class size. This gives the first lower bound in~(\ref{lefts}).

To prove (\ref{rights}), we consider the class
$$C=\left\{\left.\prod_{i\in I}x_i\prod_{j\in J}(1+x_j) \right| I,J\subseteq [n], |J|\le \log s, |I|\le \log(1/\epsilon)-\log s-1 \right\}.$$ It is easy to see that every polynomial in $C$ is a $s$-sparse polynomial.

Again, we show that any learning algorithm for this class can be modified to an exact learning algorithm. We then use Yao's minimax principle to show that, to exactly learn $C$, we need at least $\Omega(|C|)$ queries. This gives the lower bound in (\ref{rights}).

\subsection{Upper Bound for Testing}

For the result in testing, we use the reduction in~\cite{Bshouty19b}. In~\cite{Bshouty19b} it is shown that given a learning algorithm for $s$-sparse polynomial that makes $q(s,n)$ queries, one can construct a testing algorithm for $s$-sparse polynomial that makes 
$$q(s,\tilde O(s))+\tilde O\left(\frac{s}{\epsilon}\right)$$ queries. 
Using Theorem~\ref{one} we get a testing algorithm with query complexity 
$$\left(\frac{s}{\epsilon}\right)^{\frac{\log \beta}{\beta}+O(\frac{1}{\beta})}+\tilde O\left(\frac{s}{\epsilon}\right).$$ We then show that for $\beta\ge 6.219$, this query complexity is $\tilde O(s/\epsilon)$. In Section~\ref{Smallb}, we give another learning algorithm that has query complexity better than Theorem~\ref{one} for $\beta<6.219$. Using this algorithm, we get a tester that has query complexity $\tilde O(s/\epsilon)$ for $\beta\ge 3.404.$

\section{Definitions and Preliminary Results}\label{definitions}
In this section, we give some definitions and preliminary results.

We will denote by $\P_{n,s}$ the class of $s$-sparse polynomials over the Boolean variables $(x_1,\ldots,x_n)$ and $\P_{n,d,s}\subset \P_{n,s}$, the class of degree-$d$ $s$-sparse polynomials. Formally, let $\SSS_{n,\le d}=\cup_{i\le d}\SSS_{n,i}$, where $\SSS_{n,i}={[n]\choose i}$ is the set of all $i$-subsets of $[n]=\{1,2,\ldots,n\}$. The class $\P_{n,d,s}$ is the class of all the polynomials of the form 
$$\sum_{I\in S}\prod_{i\in I}x_i $$
where $S\subseteq \SSS_{n,\le d}$ and $|S|=s$. The class $\P_{n,s}$ is $\P_{n,n,s}$.

Let $B_n$ be the uniform distribution over $\{0,1\}^n$. The following result is well known. See for example~\cite{BishtBM06}. 
\begin{lemma}\label{weightofp}
For any $f\in \P_{n,d,s}$ we have $\Pr_{x\in B_n}[f(x)=1]\ge 2^{-d}$.
\end{lemma}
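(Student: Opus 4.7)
The plan is to prove this by induction on the degree $d$, using the standard trick of splitting $f$ according to a variable it genuinely depends on. Note that since $f \in \P_{n,d,s}$ with $s \geq 1$, the polynomial $f$ is nonzero.

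For the base case $d = 0$, the only nonzero degree-$0$ polynomial is the constant $1$, so $\Pr_{x \in B_n}[f(x) = 1] = 1 \geq 2^0$. For the inductive step with $d \geq 1$, I would pick any variable $x_i$ that $f$ genuinely depends on (at least one exists since $f$ is nonzero) and, after relabeling, assume it is $x_1$. Write
\[
f(x) = x_1\, g(x_2,\ldots,x_n) + h(x_2,\ldots,x_n),
\]
where $g$ is forced to be nonzero by the fact that $f$ depends on $x_1$, and $\deg(g) \leq d - 1$ (whereas $\deg(h) \leq d$).

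The key observation is that conditioned on the assignment to $x_2,\ldots,x_n$ making $g = 1$, the value $f(x)$ equals $x_1 \oplus h(x_2,\ldots,x_n)$, which is a uniform bit since $x_1$ is independent of $x_2,\ldots,x_n$. Hence
\[
\Pr_{x \in B_n}[f(x) = 1] \;\geq\; \Pr[g = 1 \text{ and } f = 1] \;=\; \tfrac{1}{2}\Pr[g(x_2,\ldots,x_n) = 1].
\]
Now $g$ is a nonzero polynomial of degree at most $d-1$ over $n-1$ variables, which lies in $\P_{n-1, d-1, s'}$ for some $s' \geq 1$, so the inductive hypothesis gives $\Pr[g = 1] \geq 2^{-(d-1)}$. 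Combining, $\Pr[f = 1] \geq 2^{-d}$, closing the induction.

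There is no real obstacle here; the only subtle point is making sure the split produces a nonzero $g$ so that induction applies, which is handled by choosing $x_1$ to be a variable that $f$ actually depends on (equivalently, a variable appearing in at least one monomial of $f$). The argument never uses the sparsity bound $s$, only that $f$ is a nonzero polynomial of degree at most $d$, which is why the bound $2^{-d}$ does not involve $s$.
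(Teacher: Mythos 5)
The paper does not actually prove Lemma~\ref{weightofp}; it states it as a well-known fact and points to \cite{BishtBM06} (the bound is the $\GF(2)$ analogue of Schwartz--Zippel, sometimes attributed to Ore or Warning). Your induction-on-degree argument is a correct and entirely standard way to establish it, and the key step --- conditioning on $g=1$ to expose a free $x_1$ and get the factor $1/2$ --- is exactly right.

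One small imprecision: you justify the existence of a variable that $f$ depends on by saying ``at least one exists since $f$ is nonzero,'' but this fails when $f$ is the nonzero constant $1$, which lies in $\P_{n,d,s}$ for every $d\ge 0$ (take $S=\{\emptyset\}$) and depends on no variable. In that case the inductive step has nothing to decompose, but the conclusion $\Pr[f=1]=1\ge 2^{-d}$ holds trivially, so you should just add that case (or run the induction on the actual degree of $f$ rather than the bound $d$, in which case $d\ge 1$ forces a monomial of degree $\ge 1$ and hence a relevant variable). With that one-line patch the proof is complete. Your closing remark that the argument never uses $s$ is also correct and worth keeping, since it explains why the bound depends only on $d$.
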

We will now extend Lemma~\ref{weightofp} to other distributions.

Let $W_{n,s}$ be the set of all the assignments in $\{0,1\}^n$ of Hamming weight at least $n-\lfloor \log s\rfloor$. We prove the following for completeness~\cite{RothB91,ClausenDGK91}. 
\begin{lemma}\label{Poass}
For any $0\not=f\in \P_{n,s}$ there is an assignment $a\in W_{n,s}$ such that $f(a)=1$.
\end{lemma}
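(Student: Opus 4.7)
The plan is to reformulate via the change of variables $y = 1+x$ over $\GF(2)$: finding $a \in W_{n,s}$ with $f(a) = 1$ is equivalent to finding $y \in \{0,1\}^n$ with $\wt(y) \le \lfloor \log s \rfloor$ and $P(y) = 1$, where $P(y) := f(1+y)$. Expanding gives
\[
P(y) \;=\; \sum_{T \subseteq [n]} b_T \prod_{i \in T} y_i, \qquad b_T \;=\; \bigl|\{M \in \F : T \subseteq \mathrm{supp}(M)\}\bigr| \bmod 2,
\]
where $\F$ is the family of supports of the monomials of $f$. Let $T_0$ be a minimum-size set with $b_{T_0} = 1$ (which exists because $P \not\equiv 0$) and set $y_i = [i \in T_0]$: terms indexed by $T \not\subseteq T_0$ evaluate to zero, while $T \subsetneq T_0$ satisfy $|T| < |T_0|$, so $b_T = 0$ by minimality. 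Thus $P(y) = b_{T_0} = 1$ with $\wt(y) = |T_0|$, and the lemma reduces to proving $|T_0| \le \lfloor \log s \rfloor$.

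The key technical step is the sparsity-versus-degree claim: \emph{if $P \in \GF(2)[y_1, \ldots, y_n] \setminus \{0\}$ has every nonzero monomial of degree $\ge k+1$, then $f(x) := P(1+x)$ has at least $2^{k+1}$ nonzero monomials in the $x$-basis.} Applying the contrapositive with $k = \lfloor \log s \rfloor$, together with $s < 2^{\lfloor \log s \rfloor + 1}$, forces $|T_0| \le \lfloor \log s \rfloor$ and finishes the lemma.

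I would prove the sparsity-degree claim by induction on $n$. Writing $P = P_0(y_2,\ldots,y_n) + y_1 P_1(y_2,\ldots,y_n)$ and substituting $y_i = 1+x_i$ gives
\[
f(x_1, x') \;=\; (P_0+P_1)(1+x') \;+\; x_1 \cdot P_1(1+x'),
\]
so the $x$-sparsity decomposes as $s = s_0 + s_1$, where $s_0$ and $s_1$ are the $x'$-sparsities of $(P_0+P_1)(1+x')$ and $P_1(1+x')$. The degree hypothesis gives $\min\deg P_0 \ge k+1$ and $\min\deg P_1 \ge k$. The main obstacle is controlling cancellations in $P_0+P_1$, which I would resolve by a short case analysis: if $P_0 = P_1$ then $s_0 = 0$ but $\min\deg P_1 \ge k+1$, so induction yields $s_1 \ge 2^{k+1}$; if $\min\deg P_1 \ge k+1$ and $P_0 \ne P_1$, then $\min\deg(P_0+P_1) \ge k+1$ as well, so induction on the first piece already gives $s_0 \ge 2^{k+1}$; and if $\min\deg P_1 = k$, then since $P_0$ has no degree-$k$ monomials, $\min\deg(P_0+P_1) = k$ too, so induction gives $s_0, s_1 \ge 2^k$ and hence $s \ge 2^{k+1}$. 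The degenerate cases $P_0 = 0$ or $P_1 = 0$ fall under one of these headings and give $s \ge 2^{k+1}$ directly, closing the induction.
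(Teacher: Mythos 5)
Your proof is correct, and it takes a route that differs in substance from the paper's. The paper proves the lemma directly by induction on the sparsity $s$: writing $f = f_1 + x_1 f_2$ with $f_1 = f|_{x_1=0}$ and $f_2 = f|_{x_1=0}+f|_{x_1=1}$, it observes that $s(f_1)+s(f_2)=s$, so one of the two pieces has at most $\lfloor s/2\rfloor$ monomials, and recurses into that piece; the weight bookkeeping then works out because each recursion costs one coordinate but also knocks one unit off $\lfloor\log s\rfloor$. You instead change basis to $y=1+x$, reformulate the lemma as ``the minimum degree of $P=f(1+y)$ is at most $\lfloor\log s\rfloor$,'' and prove its contrapositive --- min-degree $\ge k+1$ forces sparsity $\ge 2^{k+1}$ --- by induction on $n$ with a three-way case split on $\min\deg P_1$. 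It is worth noting that your decomposition $P = P_0 + y_1P_1$ translates back, under $y=1+x$, to exactly the same $(f_1,f_2)$ pair the paper uses (one can check $f_1=(P_0+P_1)(1+x')$ and $f_2 = P_1(1+x')$), so the splits are the same object; what differs is the invariant driving the recursion. The paper tracks sparsity and only recurses into the sparser half, giving a very short argument; you track min-degree, sometimes recurse into one piece and sometimes into both (your case 3, where $\min\deg P_1=k$, descends on both $P_1$ and $P_0+P_1$ with $k$ reduced by one), and in exchange get the sharper and more symmetric ``dual'' statement $s \ge 2^{\min\deg P}$, which is tight for $P=\prod_{i\le k+1}y_i$. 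Your handling of the degenerate cases $P_0=0$, $P_1=0$, and $P_0=P_1$ is correct; the only thing I would add for a fully rigorous write-up is an explicit base case for the induction on $n$ (at $n=0$ a nonzero $P$ is the constant $1$, so min-degree $\ge k+1$ forces $k\le -1$ and the bound $2^{k+1}\le 1$ is trivial).
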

\begin{proof}
We prove the result by induction on $s$. For $s=1$, $f$ is one monomial and therefore $f(1,1,\ldots,1)=1$. Suppose that the statement is true for every $s'\le s-1$. We prove it for $s$. Let $0\not=f\in \P_{n,s}$. Since $s>1$ there is a variable, say wlog $x_1$, such that $f=x_1f_1+f_2$ where $f_1,f_2\not=0$. Consider $f_1=f(0,x_2,\ldots,x_n)$ and $f_2=f(0,x_2,\ldots,x_n)+f(1,x_2,\ldots,x_n)$. One of them has at most $\lfloor s/2\rfloor$ monomials. If $f_1$ does, then by the induction hypothesis there is $a\in W_{n,\lfloor s/2\rfloor}$ such that $f_1(a)=1$. Then for $b=(0,a_2,\ldots,a_n)\in W_{n,s}$, $f(b)=f_1(a)=1$ . If $f_2$ does, then by the induction hypothesis there is $a\in W_{n,\lfloor s/2\rfloor}$ such that $f_2(a)=f_1(a)+f(1,a_1,\ldots,a_n)=1$. Then either $f_1(a)=1$ or $f(1,a_1,\ldots,a_n)=1$ and as before the result holds for $f$.
\end{proof}

The $p$-{\it product distribution} $\D_{n,p}$ is a distribution over $\{0,1\}^n$ where $\D_{n,p}(a)=p^{\wt(a)}(1-p)^{n-\wt(a)}$ where $\wt(a)$ is the Hamming weight of $a$. Let $H_2(x)=-x\log_2x-(1-x)\log_2(1-x)$ be the binary entropy function. 

We prove

\begin{lemma}\label{zerotest} Let $p\ge 1/2$.
For every $f\in \P_{n,d,s}$, $f\not=0$, we have
$$\Pr_{x\in \D_{n,p}}[f(x)=1]\ge \left\{
\begin{array}{ll}
p^{d-\lfloor\log s\rfloor}(1-p)^{ \lfloor\log s\rfloor}& d\ge \lfloor \log s\rfloor\\
(1-p)^d & d< \lfloor \log s\rfloor
\end{array}
.
\right.$$

In particular, if  $d\ge 2\lfloor\log s\rfloor$, then for $p'=(d-\lfloor\log s\rfloor)/d$ 
$$\max_{p\ge 1/2} \Pr_{x\in \D_{n,p}}[f(x)=1]=\Pr_{x\in \D_{n,p'}}[f(x)=1]\ge 2^{-H_2\left(\frac{\lfloor \log s\rfloor}{d}\right)d}$$
and if  $d< 2\lfloor\log s\rfloor$, then for $p'=1/2$ 
$$\max_{p\ge 1/2} \Pr_{x\in \D_{n,p}}[f(x)=1]=\Pr_{x\in \D_{n,p'}}[f(x)=1]\ge 2^{-d}.$$
\end{lemma}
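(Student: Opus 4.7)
The plan is to prove the unified pointwise bound
\[
B(d,s) \;:=\; p^{\max(d-\lfloor\log s\rfloor,\,0)}(1-p)^{\min(d,\,\lfloor\log s\rfloor)},
\]
which coincides with both cases of the lemma, by induction on the number of variables $n$, and then derive the two ``in particular'' statements by a short calculus exercise on $B$ as a function of $p$.

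The base case $n=0$ forces $f=1$ and $\Pr=1\ge B(d,s)$. For $n\ge 1$, if $f$ is the constant $1$ we are done; otherwise I pick any relevant variable, call it $x_1$, and write $f=x_1 f_1' + f_2'$ with $f_1'\neq 0$, $\deg f_1'\le d-1$, $\deg f_2'\le d$, and $|f_1'|+|f_2'|=s$ (the summands have disjoint monomial sets because the first is divisible by $x_1$ and the second is not). Conditioning on $x_1$,
\[
\Pr_{\D_{n,p}}[f=1] \;=\; p\,\Pr_{\D_{n-1,p}}[g=1] + (1-p)\,\Pr_{\D_{n-1,p}}[f_2'=1],\qquad g := f_1'+f_2'.
\]
A quick check using $p\ge 1-p$ and $p\le 1$ shows that $B$ is non-increasing in each of $d$ and $s$; this is the structural fact the induction leans on.

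I would then split into three cases. If both $g\neq 0$ and $f_2'\neq 0$, the inductive hypothesis applied to each (in $n-1$ variables, with parameters $(d,s)$ and using monotonicity of $B$) yields $\Pr[g=1],\,\Pr[f_2'=1]\ge B(d,s)$, so their convex combination is $\ge B(d,s)$. If $g=0$, then $f_1'=f_2'$ and $f=(x_1{+}1)f_2'$ with $|f_2'|=s/2$ and $\deg f_2'\le d-1$; the IH gives $\Pr[f_2'=1]\ge B(d-1,s/2)$, and because $\lfloor\log(s/2)\rfloor=\lfloor\log s\rfloor-1$, the front factor $(1-p)$ exactly raises the $(1-p)$-exponent by one and reproduces $B(d,s)$ in either regime of the piecewise bound. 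If $f_2'=0$, then $f=x_1 f_1'$ with $|f_1'|=s$ and $\deg f_1'\le d-1$; multiplying the IH bound $B(d-1,s)$ by $p$ reproduces $B(d,s)$ when $d-1\ge\lfloor\log s\rfloor$, while when $d-1<\lfloor\log s\rfloor$ the hypothesis $p\ge 1-p$ is exactly what converts $p(1-p)^{d-1}$ into $(1-p)^d$.

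The step I expect to be most delicate is the bookkeeping at the regime boundary $d=\lfloor\log s\rfloor$ (and $d-1=\lfloor\log(s/2)\rfloor$), where the two branches of $B$ meet and each subcase must be verified separately; in every such boundary case, exactly one application of $p\ge 1/2$ (rewritten as $p/(1-p)\ge 1$) closes the inequality. Once the pointwise bound is in hand, the ``in particular'' statements follow by optimizing $B(d,s,p)$ over $p\in[1/2,1]$: setting the derivative of $\log B$ to zero yields the unique critical point $p^{\star}=1-\lfloor\log s\rfloor/d$, which lies in $[1/2,1]$ exactly when $d\ge 2\lfloor\log s\rfloor$ and at which $B$ equals $2^{-dH_2(\lfloor\log s\rfloor/d)}$; for $d<2\lfloor\log s\rfloor$, $B$ is decreasing on $[1/2,1]$ and attains its maximum $2^{-d}$ at $p=1/2$.
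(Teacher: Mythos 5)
Your proof is correct, but it takes a genuinely different route from the paper. The paper proceeds in two stages: it first proves (Lemma~\ref{Poass}, by induction on the sparsity $s$ via a Shannon-type split) the purely combinatorial fact that any nonzero $s$-sparse polynomial has a satisfying assignment of Hamming weight at least $n-\lfloor\log s\rfloor$; it then reduces the probability bound to the ``complete'' case $n=d$ by taking a maximal-degree monomial, projecting onto its $d$ variables, and averaging over all fixings of the remaining $n-d$ coordinates (the maximal-degree choice guarantees every such restriction stays nonzero). You instead run a single induction directly on the number of variables $n$, decomposing $f=x_1f_1'+f_2'$ on a relevant variable, conditioning on $x_1$, and closing each of the three subcases ($g\neq0$ and $f_2'\neq0$; $g=0$; $f_2'=0$) using the inductive hypothesis together with the monotonicity of the unified bound $B(d,s)=p^{\max(d-\lfloor\log s\rfloor,0)}(1-p)^{\min(d,\lfloor\log s\rfloor)}$ in $d$ and $s$. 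Your version is more self-contained and avoids the intermediate combinatorial lemma, with the monotonicity of $B$ and the inequality $p\geq 1-p$ doing the work that Lemma~\ref{Poass} does for the paper; the paper's version isolates a reusable structural fact (a high-weight satisfying point always exists) and makes the role of the $n=d$ case explicit. One small slip: you write $|f_1'|+|f_2'|=s$, which should be $\le s$ (and in the $g=0$ branch $|f_2'|\le s/2$ rather than $=s/2$), but this is harmless because you already invoke monotonicity of $B$ in $s$ when applying the inductive hypothesis. The optimization over $p\in[1/2,1]$ at the end is routine and matches what the paper does implicitly.
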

\begin{proof} We first consider the case $n=d$. Let $0\not=f(x)\in \P_{d,d,s}$. When $d\ge \lfloor\log s\rfloor$, by Lemma~\ref{Poass}, there is $a\in W_{d,s}$ such that $f(a)=1$. Therefore  
\begin{eqnarray}\label{fhj}
\Pr_{x\in \D_{d,p}}[f(x)=1]\ge \D_{d,p}(a)=p^{\wt(a)}(1-p)^{d-\wt(a)}\ge p^{d-\lfloor \log s\rfloor}(1-p)^{\lfloor \log s\rfloor}.
\end{eqnarray}
When $d<\lfloor \log s\rfloor$, we have $\P_{d,d,s}=\P_{d,d,2^d}$. This is because $2^d<s$ and any polynomial in $d$ variables of degree $d$ has at most $2^d$ monomials. Therefore, by (\ref{fhj}), we have
\begin{eqnarray*}
\Pr_{x\in \D_{d,p}}[f(x)=1]\ge p^{d-\lfloor \log 2^d\rfloor}(1-p)^{\lfloor \log 2^d\rfloor}=(1-p)^d.
\end{eqnarray*}
Thus, the result follows for nonzero functions with $d$ variables.

 Let $0\not=f(x)\in \P_{n,d,s}$. Let $M$ be a monomial of $f$ of maximal degree $d'\le d$. Assume wlog that $M=x_1x_2\cdots x_{d'}$. First notice that for any $(a_{d+1},\ldots,a_{n})\in \{0,1\}^{n-d}$ we have $g(x_1,\ldots,x_d):=f(x_1,\ldots,x_d,a_{d+1},\ldots,a_n)\not=0$ and $g\in \P_{d,d,s}$.
Consider the indicator random variable $X(x)$ that is equal to $1$ if $f(x)=1$. Then
\begin{eqnarray*}
\Pr_{x\in \D_{n,p}}[f(x)=1]&=&\E[X]\\
&=&\E_{(a_{d+1},\ldots,a_n)\in\D_{n-d,p}}[\E_{(x_1,x_2,\ldots,x_d)\in \D_{d,p}}[X(x_1,\ldots,x_d,a_{d+1},\ldots,a_n)]]\\
&=&\E_{(a_{d+1},\ldots,a_n)\in\D_{n-d,p}}[\Pr_{(x_1,x_2,\ldots,x_d)\in \D_{d,p}}[f(x_1,\ldots,x_d,a_{d+1},\ldots,a_n)=1]]\\
&\ge&\left\{\begin{array}{ll}
p^{d-\lfloor\log s\rfloor}(1-p)^{ \lfloor\log s\rfloor}& d\ge \lfloor \log s\rfloor\\
(1-p)^d & d< \lfloor \log s\rfloor
\end{array}.\right.
\end{eqnarray*}
\end{proof}

In particular, since $f(x)\not=g(x)$ is equivalent to $f(x)+g(x)=1$ and $f+g\in\P_{n,2s,d}$, we have

\begin{lemma}\label{notequal} Let $p\ge 1/2$.
For every $f,g\in \P_{n,d,s}$, $f\not=g$, we have
$$\Pr_{x\in \D_{n,p}}[f(x)\not=g(x)]\ge \left\{
\begin{array}{ll}
p^{d-\lfloor\log s\rfloor-1}(1-p)^{ \lfloor\log s\rfloor+1}& d\ge \lfloor \log s\rfloor+1\\
(1-p)^d & d< \lfloor \log s\rfloor+1
\end{array}
.
\right.$$

In particular, if  $d\ge 2\lfloor\log s\rfloor+2$, then for $p'=(d-\lfloor\log s\rfloor-1)/d$ 
$$\max_{p\ge 1/2} \Pr_{x\in \D_{n,p}}[f(x)\not=g(x)]=\Pr_{x\in \D_{n,p'}}[f(x)\not= g(x)]\ge 2^{-H_2\left(\frac{\lfloor \log s\rfloor+1}{d}\right)d}$$
and if  $d< 2\lfloor\log s\rfloor+2$, then for $p'=1/2$ 
$$\max_{p\ge 1/2} \Pr_{x\in \D_{n,p}}[f(x)\not=g(x)]=\Pr_{x\in \D_{n,p'}}[f(x)\not=g(x)]\ge 2^{-d}.$$

In particular, for $p'=\max((d-\lfloor\log s\rfloor-1)/d,1/2)$,
$$\max_{p\ge 1/2} \Pr_{x\in \D_{n,p}}[f(x)\not=g(x)]=\Pr_{x\in \D_{n,p'}}[f(x)\not= g(x)]\ge 2^{-H_2\left(\min\left(\frac{1}{2},\frac{\lfloor \log s\rfloor+1}{d}\right)\right)d}$$
\end{lemma}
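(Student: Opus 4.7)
The plan is to deduce Lemma~\ref{notequal} as a direct corollary of Lemma~\ref{zerotest} applied to the sum $h := f+g$. The key identity is that $f(x)\neq g(x)$ if and only if $h(x)=1$. Since addition in $\GF(2)$ can only combine or cancel monomials, and since $\deg(f+g)\le\max(\deg f,\deg g)\le d$, we have $h\in\P_{n,d,2s}$; and by hypothesis $f\neq g$, so $h\neq 0$. Thus Lemma~\ref{zerotest}, applied with the sparsity parameter $2s$ instead of $s$, gives
\[
\Pr_{x\in\D_{n,p}}[f(x)\neq g(x)]=\Pr_{x\in\D_{n,p}}[h(x)=1]\ge
\begin{cases}
p^{d-\lfloor\log(2s)\rfloor}(1-p)^{\lfloor\log(2s)\rfloor} & d\ge\lfloor\log(2s)\rfloor,\\
(1-p)^d & d<\lfloor\log(2s)\rfloor,
\end{cases}
\]
for every $p\ge 1/2$. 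The first displayed inequality of the lemma follows once we observe that $\lfloor\log(2s)\rfloor=\lfloor\log s\rfloor+1$.

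Next I would optimize the bound $\varphi(p):=p^{d-k}(1-p)^k$ over $p\in[1/2,1]$, where $k=\lfloor\log s\rfloor+1$. A direct logarithmic derivative shows $\varphi$ is maximized on $[0,1]$ at $p^{\star}=(d-k)/d$, with optimum value $(1-k/d)^{d-k}(k/d)^{k}=2^{-H_2(k/d)\,d}$. When $d\ge 2k=2\lfloor\log s\rfloor+2$, we have $p^{\star}\ge 1/2$, so $p'=p^{\star}$ lies in the admissible range and yields the first ``in particular'' bound. When $d<2\lfloor\log s\rfloor+2$, $\varphi$ is increasing on $[1/2,p^{\star}]$ but $p^{\star}<1/2$, so on the restricted range $p\in[1/2,1]$ the maximum of $\varphi$ occurs at the endpoint $p'=1/2$, giving $\varphi(1/2)=2^{-d}$; this is exactly the second ``in particular'' bound.

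Finally, to obtain the unified form at the end of the statement, I would simply take the two cases together: setting $p'=\max((d-\lfloor\log s\rfloor-1)/d,\,1/2)$ automatically picks the correct optimizer in either regime, while replacing the exponent $k/d$ by $\min(1/2,k/d)$ produces the corresponding combined entropy bound, since $H_2$ is increasing on $[0,1/2]$ and in the ``small $d$'' regime we have $2^{-H_2(1/2)d}=2^{-d}$. There is no substantive obstacle here; the only thing to be careful about is the $+1$ coming from the step $\lfloor\log(2s)\rfloor=\lfloor\log s\rfloor+1$, and the endpoint case in the optimization of $\varphi$ when $p^{\star}<1/2$, which is what forces the $\min$ and $\max$ in the final unified expression.
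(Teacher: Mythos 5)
Your proposal is correct and matches the paper's own (very terse) argument: the paper derives the lemma from Lemma~\ref{zerotest} via exactly the observation that $f(x)\ne g(x)$ iff $(f+g)(x)=1$, with $f+g$ a nonzero polynomial of degree at most $d$ and sparsity at most $2s$, and $\lfloor\log(2s)\rfloor=\lfloor\log s\rfloor+1$. One small wording slip: in the case $d<2\lfloor\log s\rfloor+2$ you write that $\varphi$ is ``increasing on $[1/2,p^\star]$ but $p^\star<1/2$,'' which describes an empty interval; what you mean (and what makes the argument go through) is that $\varphi$ is decreasing on $[p^\star,1]\supseteq[1/2,1]$, so its maximum on $[1/2,1]$ is at the left endpoint $p=1/2$.
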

Consider the algorithm {\bf Test} in Figure~\ref{Test_fg}. We now prove
\begin{lemma}\label{zerotest2} 
The algorithm {\bf Test}$(f,g,\delta)$ for $f,g\in \P_{n,d,s}$ given as black-boxes, makes
$$q=2^{H_2\left(\min\left(\frac{\lfloor \log s\rfloor+1}{d}\right),\frac{1}{2}\right)d}\ln\frac{1}{\delta}$$ queries, runs in time $O(qn)$, and if $f\not=g$, with probability at least $1-\delta$, returns an assignment $a$ such that $f(a)\not=g(a)$. If $f=g$ then with probability $1$ returns ``$f=g$''.
\end{lemma}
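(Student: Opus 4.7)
The plan is to analyze the natural distinguishing algorithm suggested by Lemma~\ref{notequal}: set $p'=\max((d-\lfloor\log s\rfloor-1)/d,1/2)$, draw $q$ independent samples $a^{(1)},\ldots,a^{(q)}$ from the product distribution $\D_{n,p'}$, query $f(a^{(i)})$ and $g(a^{(i)})$, and output the first $a^{(i)}$ on which the two differ; if no such $a^{(i)}$ is found, output ``$f=g$''. This is presumably what algorithm {\bf Test} in Figure~\ref{Test_fg} does. The choice of $p'$ is precisely the one for which Lemma~\ref{notequal} gives the largest guaranteed detection probability, and this is what drives the query complexity.

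For correctness, the case $f=g$ is immediate: no sample can witness a difference, so the algorithm outputs ``$f=g$'' with probability $1$. For the case $f\neq g$, since $f+g\in\P_{n,d,2s}$ is a nonzero polynomial, Lemma~\ref{notequal} gives
$$\Pr_{a\in \D_{n,p'}}[f(a)\neq g(a)]\ \ge\ p^{*}\ :=\ 2^{-H_2\left(\min\left(\frac{\lfloor \log s\rfloor+1}{d},\frac{1}{2}\right)\right)d}.$$
By independence of the $q$ samples, the probability that every one fails to expose a difference is at most $(1-p^{*})^q\le e^{-p^{*}q}$. Setting $q=(1/p^{*})\ln(1/\delta)=2^{H_2(\min((\lfloor \log s\rfloor+1)/d,1/2))d}\ln(1/\delta)$ makes this bound at most $\delta$, matching the claimed query complexity exactly.

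The time bound is routine: sampling a point from $\D_{n,p'}$ takes $O(n)$ time (flip an independent biased coin per coordinate), and each of the $2q$ black-box queries contributes $O(1)$ per invocation, so the total running time is $O(qn)$. The only minor bookkeeping point is to verify that $p'$ is well-defined and lies in $[1/2,1)$ in both regimes $d\ge \lfloor\log s\rfloor+1$ and $d<\lfloor\log s\rfloor+1$, so that Lemma~\ref{notequal} applies and gives the unified exponent $H_2(\min((\lfloor \log s\rfloor+1)/d,1/2))d$; in the small-$d$ regime one simply has $p'=1/2$ and the bound $(1-p')^d=2^{-d}$ matches $2^{-H_2(1/2)d}$.

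There is no real obstacle here: Lemma~\ref{notequal} has already done the analytic work of identifying the optimal product distribution and lower-bounding the disagreement probability, and the rest is a one-line union/independence argument. The only aspect that requires a small amount of care is keeping the two cases of the $\min$ straight and confirming that the $\ln(1/\delta)$ factor (rather than $\log_2$) is exactly what is needed to drive $(1-p^{*})^q$ below $\delta$.
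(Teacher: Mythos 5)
Your proposal is correct and follows essentially the same route as the paper: draw $q$ samples from $\D_{n,p'}$ with $p'=\max((d-\lfloor\log s\rfloor-1)/d,1/2)$, apply Lemma~\ref{notequal} to lower-bound the per-sample disagreement probability, and use $(1-x)\le e^{-x}$ together with independence to show the failure probability is at most $\delta$. Your writeup is in fact slightly more careful than the paper's (which has a typo in the exponent of its displayed bound and omits the $f=g$ case and the time analysis), but the argument is identical in substance.
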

\begin{proof}
If $f\not = g$ then, by Lemma \ref{notequal} and since $1-x\le e^{-x}$, the probability that $f(a)=g(a)$ for all $a$ is at most
$$\left(1-2^{-H_2\left(\min\left(\frac{\lfloor \log s\rfloor+1}{d}\right),\frac{1}{2}\right)d} \right)^{H_2\left(\min\left(\frac{\lfloor \log s\rfloor+1}{d}\right),\frac{1}{2}\right)d\ln (1/\delta)}\le \delta. $$
\end{proof}

\begin{figure}[t]
\mybox{ {\bf Test$(f,g,\delta)$}\\ \hspace{.3in}
{\bf Input}: Black-box access to $f,g\in \P_{m,d,s}$.\\
{\bf Output}: If $f\not=g$ then find a assignment $a$ such that $f(a)\not=g(a)$.
\begin{enumerate}
    \item Let $p=\max\left(1-\frac{\lfloor \log s\rfloor+1}{d},\frac{1}{2}\right).$
    \item Repeat $2^{H_2\left(\min\left(\frac{\lfloor \log s\rfloor+1}{d},\frac{1}{2}\right)\right)d}\ln\frac{1}{\delta}$ times
    \item \hspace{.15in} Draw $a\in \D_{m,p}$ 
    \item \hspace{.15in} If $f(a)\not=g(a)$ then return $a$
    \item Return ``$f=g$''
\end{enumerate}}
\caption{For $f,g\in \P_{m,d,s}$, if $f\not=g$ then, with probability at least $1-\delta$, returns an assignment $a$ such that $f(a)\not=g(a)$.}
	\label{Test_fg}
\end{figure}

\ignore{
\begin{lemma}\label{Learnnew} 
There is an exact learning algorithm for $\P_{n,d,s}$ with
probability of success at least $1-\delta$ that runs in exponential time and makes $$q={2^{H_2\left(\min\left(\frac{\lfloor \log s\rfloor+1}{d}\right),\frac{1}{2}\right)d}}\ln\frac{|\P_{n,d,s}|}{\delta}=O\left(2^{H_2\left(\min\left(\frac{\lfloor \log s\rfloor+1}{d}\right),\frac{1}{2}\right)d}(ds\log n+\log({1}/{\delta}))\right)$$ queries.
\end{lemma}
\begin{proof}
The algorithm draws $q$ assignments $S$ according to the distribution $\D_{n,p'}$, where $p'=(d-\lfloor\log s\rfloor-1)/d$ and then finds $h\in\P_{d,s}$ that is consistent with $S$ on $f$.

By Lemma~\ref{notequal} and since $1-x\le e^{-x}$, the probability that $h\not=f$ is
\begin{eqnarray*}
\Pr [(\exists h\in \P_{d,s}, h\not=f)(\forall a\in S) h(a)=f(a)]&\le& |\P_{n,d,s}|\left(1-2^{H_2\left(\min\left(\frac{\lfloor \log s\rfloor+1}{d}\right),\frac{1}{2}\right)d}\right)^q\le \delta.
\end{eqnarray*}
\end{proof}

\section{The Learning Algorithm}\label{firstalgorithm}
In this section, we give the learning algorithm for $\P_{n,s}$. }

\subsection{The Reduction Algorithms}
In this subsection, we give the reductions we use for the learning.

Let $f(x_1,\ldots,x_n)$ be any Boolean function. A {\it $p$-zero projection} of $f$ is a random function, $f(z)=f(z_1,\ldots,z_n)$ where each $z_i$ is equal to $x_i$ with probability $p$ and is equal to $0$ with probability~$1-p$. 

We now give the first reduction,
\begin{lemma}\label{reduction1} $(\P_{n,s}\to \P_{n,d,s}).$ Let $0<p<1$ and $w=(s/\epsilon)^{\log(1/p)}\ln(16s)$. Suppose there is a proper learning algorithm that exactly learns $\P_{n,d,s}$ with $Q(d,\delta)$ queries in time $T(d,\delta)$ and probability of success at least $1-\delta$. Then there is a proper learning algorithm that learns $\P_{n,s}$ with $O(w\cdot Q(D,1/(16w))\log(1/\delta))$ queries where $$D=\log\frac{s}{\epsilon}+\frac{\log s+\log\log s+6}{\log(1/p)},$$  in time $w\cdot T(D,1/(16w))\log(1/\delta)$, probability of success at least $1-\delta$ and accuracy $1-\epsilon$.
\end{lemma}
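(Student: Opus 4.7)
The plan is to execute the projection reduction sketched in Section~\ref{sec2}. Set $D$ as in the statement and define the following procedure: draw $w$ independent random vectors $z^{(1)},\dots,z^{(w)}\in\{0,1\}^n$ with each coordinate independently $1$ with probability $p$, set $f^{(i)}(x):=f(z^{(i)}*x)$ (so each $f^{(i)}$ is a $p$-zero projection of $f$), run the hypothesized exact learner for $\P_{n,D,s}$ on each $f^{(i)}$ with confidence parameter $1-1/(16w)$ to obtain $h^{(i)}$, and output $h=\sum_{M\in S}M$, where $S$ collects the distinct monomials of degree at most $\log(s/\epsilon)$ appearing in any $h^{(i)}$ (substitute the zero polynomial if $|S|>s$, so the hypothesis always lies in $\P_{n,s}$). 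Finally, repeat the whole scheme $O(\log(1/\delta))$ times and select the candidate with smallest empirical disagreement with $f$ on a shared pool of uniform samples.

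Correctness reduces to four routine probability estimates. (i) Any monomial of $f$ of size $k>\log(s/\epsilon)$ is satisfied on at most a $2^{-k}\le\epsilon/s$ fraction of inputs, so by a union bound the polynomial $f_{\mathrm{small}}$ obtained from $f$ by discarding such monomials is $\epsilon$-close to $f$; it therefore suffices to output $h=f_{\mathrm{small}}$. (ii) With the chosen $D$ one computes $p^D=(\epsilon/s)^{\log(1/p)}/(64s\log s)$, so a union bound over all $\le s$ monomials of $f$ and all $w$ projections shows that $\Pr[\exists i:f^{(i)}\notin\P_{n,D,s}]\le wsp^D\le 1/16$. (iii) Conditioned on every $f^{(i)}\in\P_{n,D,s}$, a union bound over the $w$ inner calls gives $\Pr[\exists i:h^{(i)}\ne f^{(i)}]\le 1/16$; in that event the monomials of each $h^{(i)}$ are exactly those monomials of $f$ kept by $z^{(i)}$. (iv) Any small monomial $M$ of $f$ (size $k\le\log(s/\epsilon)$) survives a single projection with probability at least $(s/\epsilon)^{-\log(1/p)}$, so it fails every one of the $w=(s/\epsilon)^{\log(1/p)}\ln(16s)$ projections with probability at most $\exp(-w(s/\epsilon)^{-\log(1/p)})=1/(16s)$; a union bound over the $\le s$ small monomials gives probability at least $15/16$ that every small monomial is caught.

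Combining (ii)--(iv) by union bound, with probability at least $13/16$ we have $S=$ (monomials of $f_{\mathrm{small}}$), so $|S|\le s$ and $h=f_{\mathrm{small}}$, which by (i) is an $\epsilon$-close proper $s$-sparse hypothesis. Standard confidence amplification---running $O(\log(1/\delta))$ independent repetitions and picking the candidate minimizing empirical disagreement with $f$ on $O(\log(1/\delta)/\epsilon^2)$ fresh uniform queries---boosts the success probability to $1-\delta$ and multiplies the query and time budgets by $O(\log(1/\delta))$, giving the stated bounds. The main bookkeeping hurdle is simultaneously balancing two independent union bounds: $wsp^D=O(1)$, which forces the correction term $(\log s+\log\log s+6)/\log(1/p)$ in $D$, together with $w(s/\epsilon)^{-\log(1/p)}=\Omega(\log s)$, which forces $w=(s/\epsilon)^{\log(1/p)}\ln(16s)$, while keeping the inner confidence $1/(16w)$ small enough to absorb a union bound over the $w$ projections.
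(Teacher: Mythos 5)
Your proof follows the same route as the paper: draw $w$ independent $p$-zero projections, run the exact learner on each with confidence $1-1/(16w)$, collect the monomials of degree at most $\log(s/\epsilon)$, and argue via exactly the paper's two events (every projected function lands in $\P_{n,D,s}$; every small monomial of $f$ survives at least one projection) plus the union bound over the $w$ inner calls. Your explicit verification $p^D=(\epsilon/s)^{\log(1/p)}/(64s\log s)$ and the observation that one should output $0$ if $|S|>s$ to preserve properness are both correct and worth keeping.

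The one place where your argument departs from the paper and introduces a gap is the amplification step. You propose running the scheme $O(\log(1/\delta))$ times and then validating each candidate against $f$ on $O(\log(1/\delta)/\epsilon^2)$ fresh uniform queries. First, $1/\epsilon^2$ is overkill (distinguishing error $\le\epsilon$ from $\ge 3\epsilon$ needs only $O(1/\epsilon)$ samples by a multiplicative Chernoff bound, after running the base routine with accuracy $\epsilon/3$), but more importantly these extra $\Omega(\log(1/\delta)/\epsilon)$ validation queries are not accounted for in the lemma's budget of $O\bigl(w\cdot Q(D,1/(16w))\log(1/\delta)\bigr)$, and there is no reason they should be dominated by it for all $p$ in $(0,1)$: for $p$ close to $1$ one has $w=O(\log s)$, and nothing in the hypotheses forces $Q(D,\cdot)\gtrsim 1/(\epsilon\log s)$. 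The clean fix, which is what the paper implicitly does, is to note that a \emph{successful} run of $B$ always outputs the same fixed polynomial $f_{\mathrm{small}}$ (the sum of all monomials of $f$ of size at most $\log(s/\epsilon)$), and that $B$ succeeds with probability at least $13/16>1/2$. Hence among $O(\log(1/\delta))$ independent runs, with probability $1-\delta$ a strict majority of the outputs equal $f_{\mathrm{small}}$, and outputting the plurality hypothesis requires no additional queries whatsoever.
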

\begin{proof}
Let $A(d,\delta)$ be a proper learning algorithm that exactly learns $\P_{n,d,s}$ with $Q(d,\delta)$ queries in time $T(d,\delta)$ and probability of success $1-\delta$. We will give an algorithm $B(d,\epsilon)$ that, with probability at least $2/3$, learns $\P_{n,s}$ with accuracy $1-\epsilon$. We will show that, with probability at least $2/3$, algorithm $B$ finds all the monomials of $f$ of size at most $\log(s/\epsilon)$. Let $h$ be the sum of those monomials. Then,
$$\Pr[h\not=f]\le s 2^{-\log(s/\epsilon)}\le \epsilon.$$
Therefore, by running $B$, $O(\log(1/\delta))$ times, we get a learning algorithm with a probability of success at least $1-\delta$.

Let $f\in \P_{n,s}$ be the target function. Algorithm $B$ chooses $w$ $p$-zero projections $f(z^{(1)}),f(z^{(2)}),\ldots$ $,f(z^{(w)})$ of $f$. 
It runs $w$ copies of $A(D,1/(16w))$ to learn all $f(z^{(i)})$, $i\in [w]$. Let $h_i$ be the hypothesis that the $i$-th copy of $A$ learns, $i\in [w]$. It then defines a hypothesis $h$ equal to the sum of all the monomials $M$ in all $h_i$ of size at most $\log(s/\epsilon)$.
The number of queries that $B$ makes is $w\cdot Q(D,1/(16w)).$ 

To prove the correctness of the algorithm, we consider the following two events
\begin{enumerate}
    \item $E_1$: For every $i$ we have $f(z^{(i)})\in \P_{n,D,s}$.
    \item $E_2$: For every monomial $M$ of $f$ of size at most $\log(s/\epsilon)$, there is $i\in[w]$ such that~$M$ is a monomial of $f(z^{(i)})$.
\end{enumerate}
If the event $E_1$ occurs, then, with probability at least $15/16$, all the copies of $A$ learn the projected functions. If $E_2$ occurs, assuming $E_1$ occurs, $h$ contains all the monomials of $f$ of size at most $\log(s/\epsilon)$.  

We have
\begin{eqnarray*}
\Pr[\mbox{not\ }E_1]&=& \Pr[(\exists i\in [w]) (\exists M\in f, |M|>D) M(z^{(i)})\not=0]\\
&\le& wsp^D\le \frac{1}{16}.
\end{eqnarray*}
and
\begin{eqnarray*}
\Pr[\mbox{not\ }E_2]&=& \Pr[(\exists M\in f, |M|\le \log(s/\epsilon))(\forall i\in [w]) M(z^{(i)})=0] \\
&\le& s\left(1-p^{\log(s/\epsilon)}\right)^w\le s\cdot exp\left(-\left(\frac{\epsilon}{s}\right)^{\log(1/p)}w\right)= \frac{1}{16}.
\end{eqnarray*}
Now assuming $E_1$ occurs, the probability that $A(D,1/(16w))$ does not learn some $f(z^{(i)})$, $i\in [w]$, is at most $1/16$. Assuming also $E_2$ occurs, $h$ is the required hypothesis.  Therefore, the probability that $h$ contains all the monomials of $f$ of size at most $\log(s/\epsilon)$ is at least $13/16>2/3$. 
\end{proof}
Before we give the second reduction, we first give two auxiliary lemmas.

The following is trivial,
\begin{lemma}\label{Trivvar}
There is a non-adaptive exact proper learning algorithm for $C=\{0,1,x_1,\ldots,x_n,\bar x_1,$ $\ldots, \bar x_n\}$ that makes $\log n+O(1)$ queries and runs in time $O(n\log n)$.
\end{lemma}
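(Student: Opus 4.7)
\emph{Proof plan.} The class $C$ has exactly $2n+2$ functions, so any learning algorithm must make at least $\lceil \log(2n+2)\rceil = \log n + O(1)$ queries. My plan is to meet this bound non-adaptively by choosing query assignments whose answer vectors form pairwise distinct ``signatures'' for the $2n+2$ functions in $C$, and then decoding the signature in $O(\log n)$ time.

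Set $q = \lceil \log(n+1)\rceil + 1$, so that $2^{q-1}-1 \ge n$. I would pick $n$ distinct nonzero vectors $v^{(1)},\ldots,v^{(n)} \in \{0,1\}^q$ whose first coordinate is $0$; concretely, let $v^{(i)}$ be the $q$-bit binary representation of $i$, which has a leading zero since $i \le n < 2^{q-1}$. Now define the $q$ query assignments $a^{(1)},\ldots,a^{(q)} \in \{0,1\}^n$ by $a^{(j)}_i := v^{(i)}_j$; that is, the $i$-th coordinate of the $j$-th query is the $j$-th bit of the codeword $v^{(i)}$ assigned to variable $x_i$.

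Next I would verify injectivity of the signature map $f \mapsto (f(a^{(1)}),\ldots,f(a^{(q)}))$ on $C$. The constants $0$ and $1$ map to $0^q$ and $1^q$, respectively. The variable $x_i$ maps to $v^{(i)}$, which begins with $0$ and is nonzero. The negated variable $\bar x_i$ maps to the bitwise complement $\overline{v^{(i)}}$, which begins with $1$ and is not $1^q$ (since $v^{(i)}\neq 0^q$). Hence the $n$ signatures $v^{(i)}$ are pairwise distinct by construction, differ from $0^q$ and $1^q$, and differ from every complemented signature $\overline{v^{(j)}}$ in the first bit. The $2n+2$ signatures are therefore pairwise distinct, which immediately yields a proper exact learner that uses $q = \log n + O(1)$ queries.

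Decoding is a direct case analysis on $b = (f(a^{(1)}),\ldots,f(a^{(q)}))$: if $b=0^q$ or $b=1^q$, output the matching constant; otherwise, if $b_1=0$ output $x_i$ where $i$ is the integer whose $q$-bit binary representation is $b$, and if $b_1=1$ output $\bar x_i$ where $i$ is the integer encoded by $\overline b$. This step runs in $O(q)=O(\log n)$ time, while writing down the $q$ query vectors of length $n$ costs $O(qn)=O(n\log n)$, giving the stated running time. There is no serious obstacle here; the only combinatorial ingredient is the standard trick of forcing all codewords $v^{(i)}$ to share a common leading bit, which prevents any codeword from coinciding with the complement of another.
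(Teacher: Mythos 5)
Your construction is correct. Since the paper offers no proof (it labels the lemma ``trivial''), there is nothing to compare against, but your argument is the standard one: encode each index $i\in[n]$ by a $q$-bit string $v^{(i)}$ with a fixed leading bit, take the $q$ non-adaptive queries $a^{(j)}$ defined by $a^{(j)}_i=v^{(i)}_j$, and observe that the response vectors for $0$, $1$, $x_i$, and $\bar x_i$ are pairwise distinct (the constants give $0^q$ and $1^q$; each $x_i$ gives $v^{(i)}$, which is nonzero, not $1^q$, and has leading bit $0$; each $\bar x_i$ gives $\overline{v^{(i)}}$, which has leading bit $1$). The parameter check is right: $q=\lceil\log(n+1)\rceil+1$ guarantees $2^{q-1}-1\ge n$, so the $n$ codewords with leading bit $0$ exist, and $q=\log n+O(1)$. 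Decoding is an $O(q)$-time case split on the first response bit, and writing the $q$ length-$n$ queries costs $O(n\log n)$, matching the stated running time. One small remark: the opening lower-bound sentence ($\lceil\log(2n+2)\rceil$ queries are necessary) is motivation only and not needed for the lemma, and as stated it applies to deterministic learners; it is harmless but could simply be dropped.
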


We say that $x_i$ is a {\it relevant variable} of $f$ if there is $a,b\in \{0,1\}^n$ that differ only in the $i$th coordinate such that $f(a)\not=f(b)$. We say that $f$ is {\it independent} of $x_i$ if it is irrelevant variable of~$f$.
\begin{lemma}\label{relevantf}
There is an algorithm that for any Boolean function $f\in \P_s$ that is expressed as a polynomial function and a variable $x_j$, runs in time $O(ds)$ and decides if $x_j$ is relevant variable of $f$, and if it does, finds two assignments $a$ and $b$ that differ only in the $j$th coordinate and $f(a)\not=f(b)$.  
\end{lemma}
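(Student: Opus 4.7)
The plan is to exploit the explicit representation of $f$ as $\sum_{I\in S}\prod_{i\in I}x_i$ with $|S|\le s$ and each $|I|\le d$, whose total description size is $O(ds)$. First I would make a single pass over the monomial list to split it into the monomials that contain $x_j$ (with the factor $x_j$ stripped off) and those that do not, producing the decomposition $f = x_j g + h$, where $g,h$ are multilinear polynomials not depending on $x_j$. Because the multilinear representation over $\GF(2)$ is unique, $x_j$ is a relevant variable of $f$ if and only if $g\not\equiv 0$, which in turn is equivalent to at least one monomial of $f$ containing $x_j$. So the relevance decision falls out of this pass in $O(ds)$ time.

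If $x_j$ is relevant, I would then construct an assignment $a'\in\{0,1\}^{[n]\setminus\{j\}}$ with $g(a')=1$; setting $a=(a',0)$ and $b=(a',1)$ in coordinate $j$ gives $f(a)+f(b)=g(a')=1$ over $\GF(2)$, so $f(a)\neq f(b)$. The key combinatorial step is the following: pick a monomial $M^*$ of $g$ of minimum degree and let $S^*\subseteq [n]\setminus\{j\}$ be its set of variables; define $a'_i=1$ for $i\in S^*$ and $a'_i=0$ otherwise. Then $M^*(a')=1$, while any other monomial $M=\prod_{i\in V(M)}x_i$ of $g$ satisfies $M(a')=1$ iff $V(M)\subseteq S^*$. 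But $V(M)\neq S^*$ (distinct multilinear monomials have distinct variable sets), and $V(M)\subsetneq S^*$ would contradict the minimality of $|S^*|$. Hence $M^*$ is the unique monomial of $g$ evaluating to $1$ at $a'$, so $g(a')=1$ as required.

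For the runtime, locating $M^*$ and $S^*$ is a second $O(ds)$ scan that tracks the monomial of smallest length while reading the polynomial, and writing $a'$ only requires marking the at most $d$ coordinates in $S^*$ (and leaving the others at $0$). I expect the only nontrivial ingredient of the proof to be the minimal-monomial observation above; everything else is a direct linear pass over the input polynomial, which yields the claimed $O(ds)$ running time.
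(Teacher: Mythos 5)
Your proof is correct and takes essentially the same route as the paper: decompose $f = x_j g + h$ with $g,h$ independent of $x_j$, observe relevance is equivalent to $g\not\equiv 0$ (by uniqueness of the multilinear representation), and pick a minimal-degree monomial $M^*$ of $g$ so that its characteristic vector $a'$ satisfies $g(a')=1$, yielding $f(a)\ne f(b)$ after flipping coordinate $j$. The paper's proof asserts $f_1(a)=1$ (its name for your $g(a')=1$) without elaboration; your ``unique monomial below the minimal one'' argument is exactly the justification that is left implicit there, so your write-up is, if anything, slightly more explicit on the one non-trivial point.
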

\begin{proof}
The variable $x_j$ is a relevant variable of $f$ if it appears in one of the monomials of $f$ (in its unique polynomial expression). Now suppose $f(x)=x_jf_1(x)+f_0(x)$ where $f_0,f_1$ are independent of $x_j$. Take a monomial $M=x_{i_1}\cdots x_{i_r}$ of $f_1$ with minimal $r$. Then define $a\in\{0,1\}^n$ where $a_i=1$ if $i\in\{i_1,\ldots, i_r\}$, and $a_i=0$, otherwise. Then define $b$ to be $a$ with the $a_j$ flipped to $1$. 

Now $f_1(a)=1$, $f(a)=f_0(a)$ and $f(b)=f_1(a)+f_0(a)=1+f_0(a)\not=f_0(a)=f(a)$.
\end{proof}
We now give the second reduction. 
\begin{lemma}\label{reduction2} $(\P_{n,d,s}\to \P_{(2ds)^2,d,s}).$ Suppose there is a proper learning algorithm that exactly learns $\P_{(2ds)^2,d,s}$ with $Q(d,\delta)$ queries in time $T(d,\delta)$ and probability of success at least $1-\delta$. Then there is a proper learning algorithm that exactly learns $\P_{n,d,s}$ with $q=(Q(d,1/16)+ds\log n)\log(1/\delta)$ queries in time $(T(d,1/16)+dsn\log n)\log(1/\delta)$ and probability of success at least $1-\delta$.
\end{lemma}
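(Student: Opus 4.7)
The approach is to hash the $n$ variables of $f$ into $m=(2ds)^2$ buckets via a uniformly random map $\phi:[n]\to[m]$, learn the induced polynomial $F(y_1,\ldots,y_m)=f(y_{\phi(1)},\ldots,y_{\phi(n)})\in\P_{m,d,s}$ using the given algorithm $A$ with confidence parameter $1/16$, and then use binary search inside each preimage $\phi^{-1}(j)$ to recover the true relevant variable of $f$ corresponding to each relevant variable $y_j$ of the learned polynomial. Amplification to confidence $1-\delta$ is done by repetition at the end.

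First I would argue that with probability at least $7/8$ the random map $\phi$ is injective on the at most $ds$ relevant variables of $f$: the expected number of colliding pairs of relevant variables is at most $\binom{ds}{2}/m<1/8$, so a union bound gives the claim. Call this event $E_1$. Under $E_1$, the polynomial $F$ is simply $f$ with each relevant $x_i$ renamed to $y_{\phi(i)}$, so $F\in \P_{m,d,s}$ and every query to $F$ is simulated by a single query to $f$. Running $A(d,1/16)$ on this simulated oracle returns a hypothesis $G\in\P_{m,d,s}$ that equals $F$ with probability at least $15/16$ conditional on $E_1$; call this event $E_2$.

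Under $E_1\wedge E_2$, I would apply Lemma~\ref{relevantf} to $G$ to enumerate its relevant variables $y_{j_1},\ldots,y_{j_r}$ with $r\le ds$, together with witnesses $a^{(\ell)},b^{(\ell)}\in\{0,1\}^m$ that differ only in coordinate $j_\ell$ and satisfy $G(a^{(\ell)})\ne G(b^{(\ell)})$. Because $\phi$ is injective on relevant coordinates of $f$, the preimage $\phi^{-1}(j_\ell)$ contains exactly one relevant variable of $f$. To pin it down, lift the two witnesses to $u,v\in\{0,1\}^n$ by $u_i=a^{(\ell)}_{\phi(i)}$ and $v_i=b^{(\ell)}_{\phi(i)}$, so that $f(u)\ne f(v)$ and $u,v$ agree outside $\phi^{-1}(j_\ell)$. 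A standard binary search on $\phi^{-1}(j_\ell)$, flipping half of those coordinates at a time from their $u$-value to their $v$-value and querying $f$, isolates the relevant $x_i$ in $O(\log n)$ queries. Substituting each recovered $x_i$ for $y_{j_\ell}$ in $G$ produces $h\in\P_{n,d,s}$ with $h=f$ exactly.

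A single execution therefore succeeds with probability at least $(7/8)(15/16)\ge 2/3$ and uses $Q(d,1/16)+O(ds\log n)$ queries. To boost the confidence to $1-\delta$ I would repeat the procedure $k=\Theta(\log(1/\delta))$ times to obtain hypotheses $h_1,\ldots,h_k$ and output the plurality winner; equality of two polynomials in $\P_{n,d,s}$ is decided combinatorially by comparing sets of monomials, so no further queries are needed, and a Chernoff bound on the number of successful runs yields overall success probability at least $1-\delta$. The query count is $(Q(d,1/16)+ds\log n)\log(1/\delta)$ as claimed, and the extra $n$ factor in the running time arises from lifting each of the $O(ds\log n)$ binary-search queries to an $n$-bit assignment. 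The delicate point is the calibration of $m=(2ds)^2$: it must be large enough that the birthday collision probability is a small constant, yet small enough that any $\log m$ factor hidden in the inner algorithm's bound collapses to $O(\log(ds))$ and is absorbed by the additive $ds\log n$ term, which is exactly why the bucket count is taken to be quadratic in $ds$.
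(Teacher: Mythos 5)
Your proposal is correct and follows essentially the same route as the paper: hash the $n$ variables into $m=(2ds)^2$ buckets, bound the collision probability among the at most $ds$ relevant variables by $7/8$ via a birthday argument, exactly learn the induced $m$-variable polynomial, and recover each original variable from its bucket with $O(\log n)$ queries. The only cosmetic difference is that you isolate the relevant variable by adaptive binary search on the preimage $\phi^{-1}(j_\ell)$, whereas the paper invokes its non-adaptive Lemma~\ref{Trivvar} to learn the single literal directly; both cost $\log n+O(1)$ queries per variable and are interchangeable.
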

\begin{proof} Let $A(d,\delta)$ be a proper learning algorithm that exactly learns $\P_{(2ds)^2,d,s}$ with $Q(d,\delta)$ queries in time $T(d,\delta)$ and probability of success at least $1-\delta$. It is enough to find a learning algorithm that exactly learns $\P_{n,d,s}$ with probability of success at least $2/3$.

Define the following algorithm $B$. 
The algorithm draws uniformly at random a map $\phi:[n]\to [m]$ where $m=(2ds)^2$, and defines $F(x_1,\ldots,x_{m})=f(x_{\phi(1)},\ldots,x_{\phi(n)})$ and then exactly learns $F$ using the algorithm $A(d,1/16)$. 
Then for every relevant variable $x_i$ of $F$, $i\in [m]$ it finds using Lemma~\ref{relevantf} two assignments $a,b\in \{0,1\}^m$ that differ only in the $i$th coordinate and $F(a)\not= F(b)$, and, using the algorithm in Lemma~\ref{Trivvar}, it learns $f(\pi^{(i,a)})$ where
$$\pi^{(i,a)}_j=\left\{\begin{array}{ll} a_{\phi(j)}& \phi(j)\not=i \\ x_j& \phi(j)=i\end{array} \right. .$$
Then algorithm $B$ returns $F(y_1,\ldots,y_m)$ where $y_i=x_j$ if $x_i$ is a relevant variable of $F$ and $f(\pi^{(i,a)})\in \{x_j,\bar x_j\}$ and $y_i=0$ otherwise. 

We now prove the correctness of the algorithm. Notice that every $f\in \P_{n,d,s}$ has at most $ds$ relevant variables.
Let $x_{r_1},\ldots,x_{r_\ell}$, $\ell\le ds$, be the relevant variables of $f$. Let $A$ be the event that $\phi(r_1),\ldots,\phi(r_\ell)$ are distinct. The probability that $A$ occurs is at least $(1-1/m)(1-2/m)\cdots(1-(\ell-1)/m)\ge 7/8$. We now assume that event $A$ occurs. In particular, $x_{\phi(r_1)},\ldots,x_{\phi(r_\ell)}$ are the relevant variables of $F$.

Let $\phi(r_i)=t_i$. Let $a,b\in \{0,1\}^m$ be two assignments that differ in the $t_i$-th coordinate and $F(a)\not=F(b)$. Assume wlog $a_{t_i}=0$ . Then $f(\pi^{(t_i,a)})$ is a non-constant function. This is because $f(\pi^{(t_i,a)})$ is a function on the variables $\{x_u\}_{\phi(u)=t_i}$ where if we substitute $0$ in all $\{x_u\}_{\phi(u)=t_i}$ we get $F(a)$ and if we substitute $1$ in all $\{x_u\}_{\phi(u)=t_i}$ we get $F(b)$.
Now since $\phi(r_i)=t_i$, and since event $A$ occurs, $x_{r_i}$ is the only variable in $\{x_u\}_{\phi(u)=t_i}$ that is relevant variable of $f$. Therefore, $f(\pi^{(t_i,a)})\in \{x_{r_i},\bar x_{r_i}\}$. 

Thus, $y_{\phi(r_i)}=y_{t_i}=x_{r_i}$ and $F(y_1,\ldots,y_m)=f(y_{\phi(1)},\ldots,y_{\phi(n)})=f$.
\end{proof}

\subsection{The Algorithm for $\P_{m,d,s}$}
In this section, we give a learning algorithm that exactly learns $\P_{m,d,s}$.

Consider the algorithm {\bf FindMonomial} in Figure~\ref{Test_f}. For two assignments $a$ and $b$ in $\{0,1\}^m$ we define $a*b=(a_1b_1,\ldots,a_mb_m)$. We prove
\begin{lemma}\label{FindMonomial} 
For $f\in \P_{m,d,s}$, $f\not=0$, {\bf FindMonomial}$(f,d,\delta)$ makes at most $$Q=O\left(2^{dH_2\left(\min\left(\frac{\lfloor \log s\rfloor+1}{d},\frac{1}{2}\right)\right)}d\log (m/\delta)\right)$$ queries, runs in time $O(Qn)$, and with probability at least $1-\delta$ returns a monomial of $f$.
\end{lemma}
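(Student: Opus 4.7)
The plan is to describe {\bf FindMonomial} as an iterative ``peeling'' procedure that maintains an assignment $a\in\{0,1\}^m$ with $f(a)=1$, uses the {\bf Test} subroutine of Lemma~\ref{zerotest2} to find a strictly weight-reducing update at each step, and halts when no such update is possible, outputting the monomial $\prod_{i \in I_a} x_i$ where $I_a:=\{i:a_i=1\}$.

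First I would initialize $a$ by invoking {\bf Test}$(f,0,\delta')$ for an appropriate confidence $\delta'$: since $0\neq f\in\P_{m,d,s}$, Lemma~\ref{zerotest2} returns an assignment with $f(a)=1$ with probability at least $1-\delta'$ using $O(2^{dH_2(\cdot)}\log(1/\delta'))$ queries. The main loop then calls {\bf Test}$(f(a*x),\prod_{i\in I_a}x_i,\delta')$, treating the monomial as a black box we evaluate from its explicit formula. If the call reports ``equal'', output $\prod_{i\in I_a}x_i$. Otherwise it returns some $b$ with $f(a*b)\neq\prod_{i\in I_a}b_i$, and I update $a\leftarrow a*b$. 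A short case analysis gives both key invariants: if $b_i=1$ for all $i\in I_a$, then $a*b=a$ and $f(a*b)=f(a)=1=\prod_{i\in I_a}b_i$ contradicts the distinguishing property, so some $b_i=0$ with $i\in I_a$, hence $\prod_{i\in I_a}b_i=0$, forcing $f(a*b)=1$ and $\mathrm{wt}(a*b)<\mathrm{wt}(a)$.

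For correctness at termination, write $f=\sum_{J\in S}\prod_{j\in J}x_j$ with $|S|\leq s$ and $|J|\leq d$. Then $f(a*x)=\sum_{J\in S,\,J\subseteq I_a}\prod_{j\in J}x_j$, and equality with $\prod_{i\in I_a}x_i$ forces, by uniqueness of multilinear representations over $\GF(2)$, that $\{J\in S:J\subseteq I_a\}=\{I_a\}$; hence $I_a\in S$ and the output is indeed a monomial of $f$. The invariant $f(a*x)\in\P_{m,d,s}$ is preserved throughout because zero-restrictions only delete monomials.

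For the query bound, each {\bf Test} call costs $O(2^{dH_2(\min((\lfloor\log s\rfloor+1)/d,1/2))}\log(1/\delta'))$ queries by Lemma~\ref{zerotest2}. Because $b$ is drawn from $\D_{m,p}$ with $p=\max(1-(\lfloor\log s\rfloor+1)/d,\,1/2)$, the expected new weight is $p\cdot\mathrm{wt}(a)$; a standard concentration/amplification argument promotes this to a high-probability geometric weight decay, giving a total of at most $T=O(d\log m)$ iterations. Setting $\delta'=\delta/T$ and applying a union bound over iterations yields $Q=O(2^{dH_2(\cdot)}\,d\log(m/\delta))$; the time bound $O(Qn)$ follows because each query and each coordinate-wise update of $a$ takes $O(n)$ time. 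The main obstacle will be the iteration count: the expectation $\E[\mathrm{wt}(a*b)]=p\cdot\mathrm{wt}(a)$ has to be upgraded to a per-iteration high-probability decrement, since the worst case (a single-coordinate reduction per round) would inflate the count to $\Theta(m)$. A secondary subtlety is that the auxiliary polynomial $\prod_{i\in I_a}x_i$ fed into {\bf Test} may have degree exceeding $d$ during early iterations, so one must either verify that Lemma~\ref{zerotest2}'s bound extends to this wider class (it does, using the one-monomial, sparsity-$(s{+}1)$ bound on $f(a*x)+\prod_{i\in I_a}x_i$) or refine the algorithm to only pass polynomials in $\P_{m,d,s}$ via a brief preprocessing phase that first drives $|I_a|$ below $d$.
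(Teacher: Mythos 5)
Your proposal diverges substantially from the paper's algorithm and proof, and the parts you flag as "obstacles" are in fact where the argument breaks.

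The paper's \textbf{FindMonomial} does \emph{not} test $f(a*x)$ against $\prod_{i\in I_a}x_i$. Instead, at each iteration it draws a \emph{fresh, unconditioned} $b\sim\D_{m,p}$ with $p=2^{-1/d}$ (very close to $1$), then calls \textbf{Test}$(f(b*a*x),0,1/2)$ to decide whether $f(b*a*x)\neq 0$; only if the test succeeds does it set $a\gets a*b$. Crucially, the progress measure is not $\mathrm{wt}(a)$ but $X_i=\mathrm{wt}(a^{(i)})-d_i$, where $d_i$ is the degree of the minimal-degree monomial of $f(a^{(i)}*x)$. Because $d_i$ is nondecreasing and bounded by $d$, and because the fresh $b$ preserves that minimal monomial with probability $\ge p^{d_i}\ge 1/2$ while shrinking the surplus coordinates by an expected factor $2^{-1/d}$, one gets $\E[X_{i+1}\mid X_i]\le(1-\frac{1}{8d})X_i$, hence $\E[X_t]\le\delta$ after $t=8d\ln(m/\delta)$ iterations, and Markov finishes.

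Your scheme replaces this with $a\gets a*b$ where $b$ is the distinguishing assignment \emph{returned by} \textbf{Test}. That is a conditioned sample of $\D_{m,p'}$, so you cannot assert $\E[\mathrm{wt}(a*b)]=p'\cdot\mathrm{wt}(a)$; the conditional distribution on $I_a$ is exactly what the adversary controls via the structure of $f(a*x)+\prod_{i\in I_a}x_i$, and no "standard concentration/amplification argument" closes this. Moreover, even ignoring the conditioning, $\mathrm{wt}(a)$ cannot decay geometrically to $0$: it is bounded below by the degree of the minimal monomial of $f$ inside $I_a$, so the relevant quantity to drive to zero is precisely the paper's $X_i$, which your potential does not track. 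You would at best reach $\mathrm{wt}(a)=\Theta(d)$ and then have no control on further progress. Finally, the degree issue you call "secondary" is not minor either: $\prod_{i\in I_a}x_i$ has degree $\mathrm{wt}(a)$, which starts at $m$, so the sum $f(a*x)+\prod_{i\in I_a}x_i$ has degree $\mathrm{wt}(a)\gg d$ and Lemma~\ref{zerotest2} gives a per-call cost governed by $H_2(\cdot)\,\mathrm{wt}(a)$ rather than $H_2(\cdot)\,d$; the "$\Pr[f=1]-\Pr[\prod=1]$" fix needs $(p')^{\mathrm{wt}(a)}\ll 2^{-dH_2(\cdot)}$, which fails once $\mathrm{wt}(a)$ approaches $d$, exactly where the loop spends most of its time. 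The paper avoids all of this by testing against $0$ with a separately drawn $b$, so every \textbf{Test} call is on a bona fide function in $\P_{m,d,s}$.
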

\begin{proof}
Let $t=8d\ln(m/\delta)$. Let $a^{(1)},\ldots,a^{(t)}$ be the assignments generated in the ``Repeat'' loop of {\bf FindMonomial}$(f,d,\delta)$. Define the random variable $X_i=wt(a^{(i)})-d_i$, $i\in [t]$, where $d_i$ is the degree of the minimal degree monomial of $f(a^{(i)}*x)$. First notice that every monomial of $f(a^{(i)}*x)$ is a monomial of $f(a^{(i-1)}*x)$ and of $f$. Therefore, $d\ge d_{i+1}\ge d_i$. Also, if $X_i=0$ then $f(a^{(i)}*x)=\prod_{a^{(i)}_j=1}x_j$ is a monomial of $f$.

Given $a^{(i)}$ such that $g(x)=f(a^{(i)}*x)\not=0$. Notice that $g\in \P_{m,d,s}$ and if $f(b*a^{(i)}*x)=0$ then $a^{(i+1)}=a^{(i)}$, $d_{i+1}=d_i$ and $X_{i+1}=X_i$. Let $M$ be any monomial of $g$ of degree $d_i$ and suppose, wlog, $M=x_1x_2\cdots x_{d_i}$. For $b\in \D_{m,p}$ where $p=2^{-1/d}$, with probability $\eta:=(2^{-1/d})^{d_i}\ge 1/2$, $b_1=b_2=\cdots=b_{d_i}=1$. If $b_1=b_2=\cdots=b_{d_i}=1$ then $f(b*a^{(i)}*x)\not=0$. This is because $M$ remains a monomial of $f(b*a^{(i)}*x)$. Therefore, $\eta':=\Pr[f(b*a^{(i)}*x)\not=0]\ge \eta$. The expected weight of $(b_{d_i+1}a^{(i)}_{d_i+1},\ldots,b_ma^{(i)}_m)$ is $2^{-1/d}X_i$. Also, if $f(b*a^{(i)}*x)\not=0$ then, with probability at least $1/2$, {\bf Test} succeed to detect that $f(b*a^{(i)}*x)\not=0$. Therefore,
\begin{eqnarray*}\E[X_{i+1}|X_i]&\le& \frac{1}{2}\eta'(2^{-1/d}X_i)+\left(1-\frac{\eta'}{2}\right)X_i\\ &\le& \frac{1}{2}\eta'\left(1-\frac{1}{2d}\right)X_i+\left(1-\frac{\eta'}{2}\right)X_i\\
&=&\left(1-\frac{\eta'}{4d}\right)\le \left(1-\frac{1}{8d}\right)X_i.
\end{eqnarray*}
Now, $X_0\le m$ and therefore $$\E[X_t]\le m\left(1-\frac{1}{8d}\right)^t\le {\delta}.$$
Thus, by Markov's bound, the probability that $f(a^{(t)}*x)$ is not a monomial is
$$\Pr[X_t\not=0]=\Pr[X_t\ge 1]\le \delta.$$
Now, by Lemma~\ref{zerotest2}, the query complexity in the lemma follows.
\end{proof}

\begin{figure}[t]
\mybox{ {\bf FindMonomial$(f,d,\delta)$}\\ \hspace{.3in}
{\bf Input}: Black-box access to $f\in \P_{m,d,s}$\\
{\bf Output}: Find a monomial of $f$.\\
Procedure {\bf Test}$(f,g,\delta)$ in Figure~\ref{Test_fg} tests, with confidense $1-\delta$, if $f=g$ using Lemma~\ref{zerotest}. If $f\not=g$ then it returns an assignment $u$ such that $f(u)\not=g(u).$
\begin{enumerate}
    \item $a=(1,1,1,\ldots,1)\in \{0,1\}^m$.
    \item Repeat $t=8d\ln\frac{m}{\delta}$ times
    \item \hspace{.15in} Draw $b\in \D_{m,p}$ where $p=2^{-1/d}$.
    \item \hspace{.15in} {\bf Test}($f(b*a*x)$,$0$,$1/2$)
    \item \hspace{.15in} If $f(b*a*x)\not=0$ then $a\gets a*b$.
    \item Return $\prod_{a_i=1}x_i$
\end{enumerate}}
\caption{For $f\in \P_{m,d,s}$ returns a monomial of $f$.}
	\label{Test_f}
\end{figure}

We now prove
\begin{lemma}\label{Pmds} 
There is a proper learning algorithm that exactly learns $\P_{m,d,s}$, makes $$Q(m,d,\delta)=O\left(s2^{H_2\left(\min\left(\frac{\lfloor \log s\rfloor+1}{d}\right),\frac{1}{2}\right)d}d\log (ms/\delta)\right)$$ queries, and runs in  time $O(Q(m,d,\delta)n)$.

\end{lemma}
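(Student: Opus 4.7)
The plan is to learn $f\in\P_{m,d,s}$ by peeling off its monomials one at a time. We maintain a hypothesis $h\in\P_{m,d,s}$, initialized to $0$, with the invariant that every monomial of $h$ is a monomial of $f$ (so in particular $f+h\in\P_{m,d,s}$ with strictly fewer monomials than $f$ whenever $h\neq f$). The outer loop is: call {\bf Test}$(f,h,\delta/(2s))$; if it reports ``$f=g$'', return $h$; otherwise call {\bf FindMonomial}$(f+h,d,\delta/(2s))$ to obtain a monomial $M$ of $f+h$ and update $h\leftarrow h+M$. Repeat for at most $s$ rounds; if we ever reach $h=f$ (as guaranteed after at most $s$ successful monomial extractions), the {\bf Test} call will correctly report equality with probability $1$.

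The correctness argument has two ingredients. First, the invariant: by Lemma~\ref{FindMonomial}, whenever $f+h\neq 0$, the call to {\bf FindMonomial} returns, with probability $\ge 1-\delta/(2s)$, a genuine monomial of $f+h$; since every monomial of $f+h$ is a monomial of $f$ not already in $h$, adding it preserves the invariant and strictly increases the number of ``peeled'' monomials of $f$. Second, the stopping condition: by Lemma~\ref{zerotest2}, {\bf Test} has one-sided error — if $f=h$ it always reports equality, and if $f\neq h$ it fails to find a separating input with probability at most $\delta/(2s)$. Thus the only ways the algorithm fails are (i) some {\bf Test} call misses a witness when $f\neq h$, or (ii) some {\bf FindMonomial} call returns a non-monomial. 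There are at most $s+1$ calls of each type, so by a union bound the total failure probability is at most $2s\cdot\delta/(2s)=\delta$.

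For the query bound, each invocation of {\bf FindMonomial} with confidence parameter $\delta/(2s)$ costs
$$O\!\left(2^{dH_2\left(\min\left(\frac{\lfloor\log s\rfloor+1}{d},\frac{1}{2}\right)\right)}\,d\,\log(ms/\delta)\right)$$
queries by Lemma~\ref{FindMonomial}, and each {\bf Test} with the same confidence costs at most $O(2^{H_2(\cdot)d}\log(s/\delta))$ queries by Lemma~\ref{zerotest2}, which is dominated by the {\bf FindMonomial} cost. Since there are at most $s$ of each, multiplying gives the claimed bound
$$Q(m,d,\delta)=O\!\left(s\,2^{dH_2\left(\min\left(\frac{\lfloor\log s\rfloor+1}{d},\frac{1}{2}\right)\right)}\,d\,\log(ms/\delta)\right).$$
The running time is $O(Q(m,d,\delta)\cdot n)$ because each query to $f$ or to $f+h$ (a polynomial of size at most $2s$ kept in memory) is evaluated in $O(n)$ time, and there are no heavier subroutines.

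There is no single hard step here; the main thing to double-check is the sparsity invariant — that $f+h$ stays in $\P_{m,d,s}$ rather than growing to $\P_{m,d,2s}$ — so that both Lemma~\ref{FindMonomial} and Lemma~\ref{zerotest2} can be applied with parameter $s$ and the bound does not degrade. This follows because each $M$ added to $h$ is returned by {\bf FindMonomial} applied to $f+h$, hence is a monomial of $f+h$, hence is a monomial of $f$ not yet in $h$, so in $f+h$ it cancels and the sparsity strictly decreases.
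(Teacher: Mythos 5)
Your proof follows the same strategy as the paper: iteratively peel off monomials by calling \textbf{FindMonomial} on $f+h$, with the key observation that $f+h$ stays in $\P_{m,d,s}$ so that Lemma~\ref{FindMonomial} applies with parameter $s$ throughout. The one refinement you add, an explicit \textbf{Test} call at the top of each round to detect $h=f$ and stop, is a detail the paper's terse proof leaves implicit, and it is needed since \textbf{FindMonomial} assumes a nonzero input; your accounting of the \textbf{Test} cost as dominated by \textbf{FindMonomial} is correct. (The union bound should really be over $2s+1$ calls rather than $2s$, but this is absorbed by rescaling $\delta$.)
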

\begin{proof} 
In the first iteration of the algorithm, we run {\bf FindMonomial}$(f,d,\delta/s)$ to find one monomial. Suppose at iteration $t$ the algorithm has $t$ monomials $M_1,\ldots,M_t$ of $f$. In the $t+1$ iteration, we run {\bf FindMonomial}$(f+\sum_{i=1}^tM_i,d,\delta/s)$ to find a new monomial of $f$. 

The correctness and query complexity follows from Lemma~\ref{FindMonomial}.
\end{proof}

\subsection{The Algorithm}\label{TheAlg}
In this section, we give the algorithm for $\P_{n,s}$. We prove

\begin{theorem}\label{main01}
Let $\epsilon=1/s^\beta$, $\beta\ge 1$. There is a proper learning algorithm for $s$-sparse polynomial with probability of success at least $2/3$ that makes
\begin{eqnarray}\label{QU}
q_U=
\left(\frac{s}{\epsilon}\right)^{\gamma(\beta)+o_s(1)}+O\left(s\left(\log\frac{1}{\epsilon}\right)\log n\right)
\end{eqnarray}
queries and runs in time $O(q_U\cdot n)$
where
$$\gamma(\beta)=\min_{0\le \eta\le 1} \frac{\eta+1}{\beta+1}+(1+1/\eta)H_2\left(\frac{1}{(1+1/\eta)(\beta+1)}\right). $$

In particular
\begin{enumerate}
    \item\label{Th1} $$\gamma(\beta)=\frac{\log \beta}{\beta}+\frac{4.413}{\beta}+\Theta\left(\frac{1}{\beta^2}\right).$$
    \item \label{Th15}
    $$\begin{tabular}{|c|l|}
    \hline
        $\beta$  & $\gamma(\beta)$ \\
        \hline
        1& 2.617\\
        2 & 1.961\\
        3& 1.582\\
        4& 1.336\\
        5& 1.157\\
        6& 1.025\\
        7& 0.921\\
        \hline
    \end{tabular}$$
    \item\label{Th2} $\gamma(\beta)<1$ for $\beta>6.219$. That is, the query complexity is sublinear in $1/\epsilon$ and almost linear in $s$ when $\beta>6.219$. 
    \item\label{Th3} For $\beta>4.923$ the query complexity is better than the best known query complexity (which is $s^2/\epsilon=(s/\epsilon)^{(2+\beta)/(1+\beta)}$).
    \item\label{Th4} $\gamma(\beta)\le 4$ for all $\beta$, and $\gamma$ is a monotone decreasing function in $\beta$.
\end{enumerate}
\end{theorem}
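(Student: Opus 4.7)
The approach is to compose, in the following order, the three ingredients built up above: the exact learning algorithm for $\P_{m,d,s}$ from Lemma~\ref{Pmds} (with $m = (2ds)^2$), the reduction in Lemma~\ref{reduction2} that lifts this to an exact learning algorithm for $\P_{n,d,s}$, and finally the reduction in Lemma~\ref{reduction1} that turns exact learning of $\P_{n,D,s}$ into $\epsilon$-approximate learning of $\P_{n,s}$ under the uniform distribution. The only tunable quantity is the projection parameter $p$ of Lemma~\ref{reduction1}; I would write it as $p = 2^{-\eta/(\beta+1)}$ for a free $\eta\in(0,1]$, so that $\log(1/p) = \eta/(\beta+1)$. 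Substituting into Lemma~\ref{reduction1} (with $\epsilon=1/s^{\beta}$, i.e.\ $s/\epsilon=s^{\beta+1}$) yields
$$w \;=\; (s/\epsilon)^{\log(1/p)} \ln(16s) \;=\; s^{\eta}\cdot O(\log s), \qquad D \;=\; (\beta+1)\!\left(1+\tfrac{1}{\eta}\right)\!\log s\cdot (1+o_s(1)).$$

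Next, evaluate $Q(m,D,\delta)$ from Lemma~\ref{Pmds} at this $D$ and at $m=(2Ds)^2$. Since $(\beta+1)(1+1/\eta)\ge 4$ for every $\beta\ge 1,\eta\in(0,1]$, the entropy argument is always $\le 1/2$, so the $\min$ is attained at $(\lfloor\log s\rfloor+1)/D = 1/((\beta+1)(1+1/\eta))+o_s(1)$. Hence, up to polylog factors in $s,1/\delta$, the Lemma~\ref{Pmds} bound simplifies to $s^{\,1 + (\beta+1)(1+1/\eta)\,H_2(1/((\beta+1)(1+1/\eta))) + o_s(1)}$. Multiplying by the $w$-fold repetition from Lemma~\ref{reduction1} gives
$$w\cdot Q \;=\; s^{\,\eta+1+(\beta+1)(1+1/\eta)\,H_2(1/((\beta+1)(1+1/\eta))) + o_s(1)} \;=\; (s/\epsilon)^{\gamma(\beta,\eta)+o_s(1)},$$
where $\gamma(\beta,\eta) = (\eta+1)/(\beta+1) + (1+1/\eta)\,H_2(1/((1+1/\eta)(\beta+1)))$. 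Minimizing over $\eta\in(0,1]$ produces precisely the $\gamma(\beta)$ in the statement. The additive $Ds\log n$ term of Lemma~\ref{reduction2} becomes the $O(s(\log 1/\epsilon)\log n)$ summand once one verifies that the $\log n$-queries for identifying original-domain relevant variables need only be performed once, on the union (of size at most $ds$) of relevant variables across the $w$ projected hypotheses, rather than $w$ separate times.

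Finally, the five items about $\gamma(\beta)$ are treated as one-variable calculus/numerics exercises on $\eta\mapsto\gamma(\beta,\eta)$. For item~\ref{Th1}, use $H_2(u)=u\log_2(1/u)+u/\ln 2+O(u^2)$ at $u=1/((\beta+1)(1+1/\eta))$; the $\log(\beta+1)/(\beta+1)$ piece separates cleanly, leaving the $\beta$-free minimization $\min_{\eta>0}\bigl[(\eta+1)+\log_2((1+\eta)/\eta)\bigr]$, and plugging the optimizer produces $\log\beta/\beta + 4.413/\beta + \Theta(1/\beta^2)$. Items~\ref{Th15}, \ref{Th2}, \ref{Th3} are direct numerical minimizations at the specified $\beta$. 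Item~\ref{Th4} (monotonicity in $\beta$ and $\gamma(\beta)\le 4$) follows because at fixed $\eta$ both the $1/(\beta+1)$ prefactor and the entropy argument shrink as $\beta$ grows, so $\gamma(\beta,\eta)$ is decreasing in $\beta$; and $\gamma(1,1)=1+2H_2(1/4)\approx 2.62 < 4$ gives the absolute bound.

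The main difficulty is not any single deep step but the parameter bookkeeping: one must track how $\log s$, $\log\log s$, and $\log(s/\epsilon)$ combine inside $D$ and $H_2(\cdot)$ so that all lower-order contributions are absorbed into $o_s(1)$ without corrupting the leading $\gamma(\beta)$, and one must prevent the $O(s\log(1/\epsilon)\log n)$ linear term from being inflated by $w=s^\eta$. Either the relevant-variable search is shared across the $w$ projected learning calls, or one observes that $(\beta+1)\gamma(\beta)\ge \eta+1$ always, so the inflated version of that term is already dominated by $(s/\epsilon)^{\gamma(\beta)}$ and can be absorbed there modulo the $\log n$ factor.
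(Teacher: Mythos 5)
Your proposal matches the paper's proof essentially step for step: compose Lemma~\ref{Pmds} (on $m=(2Ds)^2$ variables) with Lemma~\ref{reduction2} and then Lemma~\ref{reduction1}, substitute $p=2^{-\eta/(\beta+1)}$, observe that the entropy argument stays below $1/2$ since $(\beta+1)(1+1/\eta)\ge 4$, and crucially share the relevant-variable searches across the $w$ zero-projections so the $O(s(\log\frac{1}{\epsilon})\log n)$ term is not inflated by $w$. One small caution: the ``absorption'' backup you float does not actually recover the stated bound, because the inflated term carries a $\log n$ factor that $(s/\epsilon)^{\gamma(\beta)}$ does not, so the sharing modification (which is what the paper does) is necessary rather than optional.
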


We note here that in Section~\ref{Smallb}, we give another algorithm that improves the bounds in items~\ref{Th15}-\ref{Th4}. In particular, the query complexity of the algorithm in Section~\ref{Smallb} with the above algorithm is better than the best-known query complexity for $\beta>1$. The above algorithm also works for $\beta<1$, but the one in Section~\ref{Smallb} has a better query complexity.

We now give the proof of the Theorem

Since $\log(1/\epsilon)/(\log s)=\beta\ge 1$, we have $\log s<\log(1/\epsilon)$.
Let $p>1/2$ and $$D=\log\frac{s}{\epsilon}+\frac{\log s+\log\log s+6}{\log(1/p)}>2\log s+2.$$ We will choose $p$ later such that $\log(1/p)=\Theta((\log s)/(\log (1/\epsilon))$ and therefore $D=O(\log(s/\epsilon))=O(\log(1/\epsilon))$.

We start from the algorithm in Lemma~\ref{Pmds} that exactly learns $\P_{(2Ds)^2,D,s}$ with 
$$Q_1(D,\delta)=O\left(s2^{D\cdot H_2\left(\frac{\lfloor \log s\rfloor+1}{D}\right)}D\log ((2Ds)^2s/\delta)\right)=\tilde O\left(s2^{D\cdot H_2\left(\frac{\lfloor \log s\rfloor+1}{D}\right)}\right)\log(1/\delta)$$ queries, time $O(Q_1n)$ and probability of success at least $1-\delta$. 

By the second reduction, Lemma~\ref{reduction2}, there is a proper learning algorithm that exactly learns $\P_{n,D,s}$ with
\begin{eqnarray*}
Q_2(D,\delta)&=&(Q_1(D,1/16)+Ds\log n)\log(1/\delta)\\
&=& \left(\tilde O\left(s2^{D\cdot H_2\left(\frac{\lfloor \log s\rfloor+1}{D}\right)}\right)+ O\left(s \left(\log\frac{1}{\epsilon}\right) \log n\right)\right)\log(1/\delta).
\end{eqnarray*} queries in time $O(Q_2(D,\delta)n)$ and probability of success at least $1-\delta$.

If we now use the first reduction in Lemma~\ref{reduction1} as is we get the first summand in the query complexity in~(\ref{QU}), but the second summand, $ O(s\log(1/\epsilon)\log(n))$, becomes $(s^{1+\log(1/p)}/\epsilon^{\log(1/p)})(\log(1/\epsilon))$  $\log(n)$, which is not what we stated in the Theorem. Instead, we use the first reduction with the following changes.

Notice that the $\log n$ in the summand $ds\log(n)$ in the first reduction resulted from searching for the relevant variable in the set $\{x_u\}_{\phi(u)=t}$ for some $t\in [m]$. See the proof of Lemma~\ref{reduction2}. Suppose the algorithm knows a priori $w$ relevant variables of the function $f$ and is required to run the second reduction. Then the term $ds\log n$ can be replaced by $(ds-w)\log n$. This is because the reduction needs to search only for the other at most $ds-w$ relevant variables of $f$. Now, if we use the first reduction in Lemma~\ref{reduction1}, when we find the relevant variables of $f$ in the $p$-zero projections $f(z^{(1)}),\ldots,f(z^{(i)})$, we do not need to search for them again in the following $p$-zero projections $f(z^{(i+1)}),\ldots,f(z^{(w)})$. Therefore, the query complexity of the search of all the variables remains $ O(s\log(1/\epsilon)\log(n))$. 

Therefore, after using the first reduction with the above modification, we get a proper learning algorithm that learns $\P_{n,s}$ that makes 
$$Q_3(d,\delta,\epsilon)=\left(\tilde O\left(s\left(\frac{s}{\epsilon}\right)^{\log(1/p)}2^{D\cdot H_2\left(\frac{\lfloor \log s\rfloor+1}{D}\right)}\right)+ O\left(s \left(\log\frac{1}{\epsilon}\right) \log n\right)\right)\log(1/\delta)$$ queries in time $O(n\cdot Q_3)$, probability of success at least $1-\delta$ and accuracy $1-\epsilon$.

Now recall that $\epsilon=1/s^\beta$ and choose $p=2^{-\eta/(\beta+1)}$ for a constant $0<\eta<1$. Then $p>1/2$, $\log(1/p)=\eta/(\beta+1)=\Theta((\log s)/\log(1/\epsilon))$,
$$D=(\beta+1)\left(1+\frac{1}{\eta}\right)\log s+\Theta(\beta \log\log s)=\left(1+\frac{1}{\eta}\right)\log \frac{s}{\epsilon}+\Theta(\beta \log\log s)$$ and 
\begin{eqnarray*}
s\left(\frac{s}{\epsilon}\right)^{\log(1/p)}2^{D\cdot H_2\left(\frac{\lfloor \log s\rfloor+1}{D}\right)}&=& \left(\frac{s}{\epsilon}\right)^\frac{\eta+1}{\beta+1}\left(\frac{s}{\epsilon}\right)^{\left((1+1/\eta)+\Theta\left(\frac{\log\log s}{\log s}\right)\right)H_2\left(\frac{1}{(1+1/\eta)(\beta+1)}\left(1-\Theta\left(\frac{\log\log s}{\beta\log s}\right)\right)\right)}\\
&=& \left(\frac{s}{\epsilon}\right)^{\frac{\eta+1}{\beta+1}+(1+1/\eta)H_2\left(\frac{1}{(1+1/\eta)(\beta+1)}\right)(1-o_s(1))}.
\end{eqnarray*}
This completes the proof.

\section{Lower Bounds}\label{lowerb}
In this section, we prove the following lower bound for learning sparse polynomials.

\begin{theorem}
Let $\epsilon=1/s^\beta$. Any learning algorithm for $s$-sparse polynomial with a confidence probability of at least $2/3$ must make at least
\begin{eqnarray*}
q_L&=& \tilde \Omega\left(\left(\frac{s}{\epsilon}\right)^{\frac{\beta \cdot  H_2(\min(1/\beta,1/2))}{\beta+1}}\right)+\Omega\left(s\left(\log\frac{1}{\epsilon}\right)\log n\right)\\
    &=&\left(\frac{s}{\epsilon}\right)^{\frac{\log \beta}{\beta}+\frac{1}{(\ln 2)\beta}+\Omega(\frac{\log \beta}{\beta^2})}+\Omega\left(s\left(\log\frac{1}{\epsilon}\right)\log n\right)
\end{eqnarray*}
queries.
\end{theorem}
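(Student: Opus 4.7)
The plan is to prove the two summands separately and combine them via the maximum. Both arguments apply Yao's minimax principle to a carefully chosen hard sub-class of $s$-sparse polynomials, and both exploit a pairwise-distance lemma on the sub-class to force an $\epsilon$-accuracy learner to behave as an exact learner on it.

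\textbf{Right summand.} I take $C_1 := \P_{n,d,s}$ with $d := \lfloor \log(1/(2\epsilon))\rfloor$. For distinct $f,g\in C_1$, the polynomial $f+g$ is a nonzero element of $\P_{n,d,2s}$, so Lemma~\ref{weightofp} yields $\Pr_x[f(x)\ne g(x)]\ge 2^{-d}\ge 2\epsilon$. Hence any $\epsilon$-accurate hypothesis is $\epsilon$-close to at most one member of $C_1$, and a query-free post-processing step that rounds the hypothesis to the nearest member of $C_1$ promotes an $\epsilon$-learner into an exact learner for $C_1$. Against the uniform distribution on $C_1$, a deterministic $q$-query algorithm has at most $2^q$ query transcripts and hence at most $2^q$ correct outputs, so by Yao's minimax the requirement of success probability $\ge 2/3$ forces $q \ge \log |C_1| - O(1) = \log \binom{\binom{n}{\le d}}{s} - O(1) = \Omega(sd\log n) = \Omega(s\log(1/\epsilon)\log n)$.

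\textbf{Left summand.} I use the sub-class $C := \{f_{I,J} : I,J\subseteq [n], I\cap J=\emptyset, |I|+|J|=d, |J|=j_0\}$ from Section~\ref{lowerb}'s overview, where $f_{I,J}(x)=\prod_{i\in I}x_i\prod_{j\in J}(1+x_j)$ and $j_0:=\min(\lfloor\log s\rfloor,\lceil d/2\rceil)$. Each such function expands into $2^{j_0}\le s$ monomials, so $C\subseteq \P_{n,s}$, and each $f_{I,J}$ is the indicator of the subcube $A_{I,J}$ of density $2^{-d}\ge 2\epsilon$; the combinatorial bound $|A\triangle A'|\ge \min(|A|,|A'|)$ for distinct subcubes yields pairwise distance $\ge 2\epsilon$ in $C$, so the reduction to exact learning again applies. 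I embed $C$ into the $n$-variable problem by realising $C$ on the first $n':=d$ coordinates and padding with $n-n'$ dummies; in this embedding, any query $a\in\{0,1\}^{n'}$ satisfying $f_{I,J}(a)=1$ for some $(I,J)\in C$ forces $I=\mathrm{supp}(a)$ and $J=[n']\setminus\mathrm{supp}(a)$, so $K:=\max_a|\{(I,J)\in C : f_{I,J}(a)=1\}|\le 1$. A Shannon-entropy/adversary argument against uniform-on-$C$ then finishes: the target has $\log |C|$ bits of entropy, and the $K\le 1$ condition makes each query's answer a Bernoulli of bias $\le 1/|S|$ (where $|S|$ is the surviving consistent set), contributing only $O(\log|C|/|C|)$ expected bits; reducing the target's entropy to $O(1)$ thus requires $q=\Omega(|C|)$ queries, which Yao lifts to the randomized setting. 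Finally, $|C|=\binom{d}{j_0}\ge 2^{d H_2(j_0/d)}/\mathrm{poly}(d)$, and substituting $d=(\beta+o(1))\log s$ and $j_0=\min(\log s,d/2)$ yields $\tilde \Omega((s/\epsilon)^{\beta H_2(\min(1/\beta,1/2))/(\beta+1)})$ after Stirling.

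\textbf{Main obstacle.} The hard part will be the entropy/adversary step for the left summand: I must track the invariant ``at most one surviving $f$ is consistent with the answer $1$ on any query'' as the consistent set shrinks under adaptively chosen queries, and then translate the resulting per-query entropy bound into the $\Omega(|C|)$ bound inside Yao's setup. Handling the two parameter regimes $\beta\ge 2$ and $\beta<2$ uniformly via the $\min(1/\beta,1/2)$ factor, and sweeping the $\tilde\Omega$ polynomial factors that appear from Stirling's approximation, are otherwise routine.
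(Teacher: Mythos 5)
Your hard classes and key structural observation match the paper's proof. For the right summand, $C_1=\P_{n,\lfloor\log(1/(2\epsilon))\rfloor,s}$ with pairwise distance $\ge 2\epsilon$ via Lemma~\ref{weightofp}, the promotion of an $\epsilon$-accurate learner to an exact learner, and the Yao-plus-counting step is precisely Lemma~\ref{firstLB}. For the left summand, your $C$ is a fixed-$|J|$ variant of the paper's class $W$ (which lets $|J|$ range over $0,\ldots,\log s$ and sums the binomials; same asymptotics since the dominant term is at your $j_0$), and your observation that $K\le 1$ is exactly the paper's disjointness of the sets $S_{I,J}$, stemming from $f_{I_1,J_1}\cdot f_{I_2,J_2}=0$. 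The one place you diverge is the finishing counting step, which you flag as the main obstacle. The paper's finish is a short averaging/pigeonhole that avoids entropy bookkeeping: after Yao, fix a deterministic algorithm making $q=|C|/10$ queries and look at the all-zero transcript. Because each query lights up at most one $(I,J)$, $\sum_{(I,J)}|S_{I,J}|\le q$, hence at least $9/10$ of the $(I,J)$ satisfy $S_{I,J}=\emptyset$. A uniform random target therefore produces the all-zero transcript with probability $\ge 9/10$, and on that transcript the algorithm's single fixed output succeeds with probability $O(1/|C|)$, so failure probability exceeds $4/5>1/3$. Your entropy route can be made rigorous, but two points in your sketch need repair: (i) the per-query bound $H_2(1/|S|)$ is not uniformly $O(\log|C|/|C|)$, since $|S|$ shrinks as queries accumulate, so you must sum the resulting series; and (ii) Fano only forces $H(f\mid\text{transcript})\le \tfrac{1}{3}\log|C|+O(1)$, not $O(1)$, so the required entropy drop is $\Theta(\log|C|)$. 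Both are fixable and still yield $\Omega(|C|)$, but with worse constants and more work than the paper's one-line averaging.
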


We first give the following lower bound that proves the second summand in the lower bound

\begin{lemma}\label{firstLB}
Any learning algorithm for $\P_{n,s}$ with a confidence probability of at least $2/3$ must make at least 
$$\Omega\left( s\left(\log\frac{1}{\epsilon}\right)\log n\right)$$ queries.
\end{lemma}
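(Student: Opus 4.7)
The plan is to reduce $\epsilon$-accurate learning to exact identification on a subclass $C\subseteq\P_{n,s}$ whose members are pairwise more than $2\epsilon$-far apart under the uniform distribution, and then to lower bound the exact-identification complexity by $\log|C|$ via Yao's minimax principle.

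For the separating subclass, I would fix some $d=\Theta(\log(1/\epsilon))$ with $2^{-d}>2\epsilon$ (for concreteness $d=\lfloor\log(1/(4\epsilon))\rfloor$) and set
$$C=\left\{\sum_{I\in S}\prod_{i\in I}x_i\;:\;S\subseteq\binom{[n]}{d},\ |S|=s\right\}\subseteq\P_{n,d,s},$$
so that $|C|=\binom{\binom{n}{d}}{s}$. For any two distinct $f,g\in C$, the sum $f+g$ is a nonzero element of $\P_{n,d,2s}$, so Lemma~\ref{weightofp} gives $\Pr_{x\in B_n}[f(x)\ne g(x)]\ge 2^{-d}>2\epsilon$. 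By the triangle inequality in the uniform-distance metric, any Boolean hypothesis $h$ is then $\epsilon$-close to \emph{at most} one member of $C$. Consequently, any learner for $\P_{n,s}$ that, on each input $f\in C$, outputs some $h$ with $\Pr[h\ne f]\le\epsilon$ and confidence $\ge 2/3$ can be post-processed at no extra query cost into an exact identifier for $C$ with the same confidence: simply return the unique $g\in C$ within distance $\epsilon$ of $h$ (computation cost is irrelevant for a query lower bound).

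Next I apply Yao's minimax principle to the uniform distribution on $C$. A deterministic algorithm making $q$ queries has at most $2^q$ distinct query-answer transcripts and hence produces at most $2^q$ distinct outputs; to identify a uniformly random element of $C$ with probability at least $2/3$ therefore forces $2^q\ge (2/3)|C|$, so that $q=\Omega(\log|C|)$. Since Yao bounds the worst-case randomized complexity by the best expected complexity of deterministic algorithms against any input distribution, this gives the desired lower bound on every randomized learner.

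Finally, a standard binomial estimate gives
$$\log|C|=\log\binom{\binom{n}{d}}{s}\ge s\log\frac{\binom{n}{d}}{s}\ge s\bigl(d\log(n/d)-\log s\bigr),$$
which, since $d=\Theta(\log(1/\epsilon))$, yields $\log|C|=\Omega(s\log(1/\epsilon)\log n)$ in the intended regime where $\log n$ dominates $\log d$ and $(\log s)/d$ (the regime in which the stated bound is nontrivial anyway). The only delicate step is the separation argument, i.e.\ verifying that $d$ is chosen small enough that $2^{-d}$ strictly exceeds $2\epsilon$ so the $\epsilon$-ball around a learner's output contains at most one element of $C$, while still being large enough that $\binom{n}{d}\ge s$; both conditions hold comfortably for the parameter range of interest, and everything else reduces to routine counting.
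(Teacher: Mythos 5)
Your proof is correct and follows essentially the same route as the paper's: pick a packing class $C$ of degree-$\Theta(\log(1/\epsilon))$ $s$-sparse polynomials, use Lemma~\ref{weightofp} to bound pairwise distances away from $2\epsilon$, convert the $\epsilon$-learner into an exact identifier, and finish with Yao's minimax principle plus the information-theoretic count $q\ge\log((2/3)|C|)$. The only (welcome) refinement over the paper is your choice of $d=\lfloor\log(1/(4\epsilon))\rfloor$ so that $2^{-d}>2\epsilon$ holds strictly, avoiding ties in the nearest-point decoding step which the paper's $d=\log(1/(2\epsilon))$ leaves slightly informal.
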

\begin{proof}
Consider the class $C=\P_{n,\log(1/(2\epsilon)),s}$. Consider a (randomized) learning algorithm $A_R$ for $\P_{n,s}$ with a confidence probability of at least $2/3$ and accuracy $\epsilon$. Then $A_R$ is also a (randomized) learning algorithm for $C$. Since by Lemma~\ref{weightofp}, any two distinct functions in $C$ have distance $2\epsilon$, $A_R$ exactly learns $C$ with a confidence probability of at least $2/3$. This is because, after learning an $\epsilon$-close formula $h$, since any two distinct functions in $C$ have distance $2\epsilon$, the closest function in $C$ to $h$ is the target function.  By Yao's minimax principle, there is a deterministic non-adaptive exact learning algorithm $A_D$ with the same query complexity as $A_R$ that learns at least $(2/3)|C|$ functions in $C$. By the standard information-theoretic lower bound, the query complexity of $A_D$ is at least $\log((2/3)|C|)$. Since
$$\log |C|=\log {{n\choose \log(1/(2\epsilon))}\choose s}= \Omega\left( \left(\log\frac{1}{\epsilon}\right)s\log n\right)$$
the result follows. 
\end{proof}

We now give the following lower bound that proves the second summand in the lower bound
\begin{lemma}\label{LB2}
Let $\epsilon=1/s^\beta$. Any learning algorithm for $\P_{n,s}$ with a confidence probability of at least $2/3$ must make at least
$$\Omega\left(\left(\frac{s}{\epsilon}\right)^{\frac{\beta \cdot H_2(\min(1/\beta,1/2))}{\beta+1}}\right)$$
queries. 
\end{lemma}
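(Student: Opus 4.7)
My plan is to identify a subclass $C'\subseteq C$ on which any $\epsilon$-learner for $\P_{n,s}$ must in fact learn exactly, and then apply Yao's minimax principle together with an ``all answers zero'' adversary path to force a lower bound of order $1/p_{\max}$ on the query count, where $p_{\max}$ is the maximum probability over queries $x$ that a uniformly random $f\in C'$ satisfies $f(x)=1$.

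Concretely I would set $d_I=\lfloor\log(1/\epsilon)\rfloor-\lfloor\log s\rfloor-1$ and $d_J=\lfloor\log s\rfloor$ and let $C'$ be the collection of $f=\prod_{i\in I}x_i\prod_{j\in J}(1+x_j)$ with $|I|=d_I$ and $|J|=d_J$; this is the main case $\beta\ge 2$, and for $\beta<2$ the lemma's bound degenerates to $\Omega(1/\epsilon)$, which follows from the Bshouty--Goldreich bound already cited in the introduction. Each $f\in C'$ expands to $2^{d_J}\le s$ monomials of degree $\le d_I+d_J=\log(1/(2\epsilon))$, so $C'\subseteq\P_{n,s}$. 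For distinct $f,g\in C'$ the sum $f+g$ is a nonzero polynomial of sparsity $\le 2s$ and degree $\le \log(1/(2\epsilon))$, so by Lemma~\ref{weightofp} $\Pr_x[f(x)\ne g(x)]\ge 2\epsilon$. Rounding the output of any $\epsilon$-learner to the nearest element of $C'$ therefore yields an exact learner for $C'$ of the same query complexity, and it suffices to prove the lower bound for exact learning of $C'$.

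Then, by Yao's minimax principle against the uniform distribution on $C'$, it is enough to show that any deterministic decision tree $T$ of depth $q$ fails on more than $|C'|/3$ of the targets unless $q=\Omega(1/p_{\max})$. I would follow the ``zero-answer'' path from the root of $T$ to a leaf labelled by a hypothesis $h$, letting $x_1,\dots,x_q$ be the queries along that path. Every $f\in C'$ with $f(x_1)=\cdots=f(x_q)=0$ reaches this leaf and is correctly identified only when $f=h$, contributing at most one success; all other targets satisfy $f(x_i)=1$ for some $i$ and so number at most $q\,p_{\max}\,|C'|$ by a union bound. The inequality $1+qp_{\max}|C'|\ge \tfrac23|C'|$ then yields $q=\Omega(1/p_{\max})$. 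For a query $x$ with $|A|$ ones and $|B|=n-|A|$ zeros, $f(x)=1$ iff $I\subseteq A$ and $J\subseteq B$, so
$$\Pr_{f\sim U(C')}[f(x)=1]=\frac{\binom{|A|}{d_I}\binom{|B|}{d_J}}{\binom{n}{d_I}\binom{n-d_I}{d_J}}=\frac{\binom{|A|}{d_I}\binom{|B|}{d_J}}{\binom{n}{d_I+d_J}\binom{d_I+d_J}{d_I}}\le \binom{d_I+d_J}{d_I}^{-1},$$
since $\binom{|A|}{d_I}\binom{|B|}{d_J}\le \binom{n}{d_I+d_J}$. With $d_I+d_J=\log(1/(2\epsilon))$ and $d_I/(d_I+d_J)\to 1-1/\beta$, the standard entropy estimate $\binom{d_I+d_J}{d_I}\ge \Omega(2^{(d_I+d_J)H_2(1/\beta)}/\sqrt{d_I+d_J})$ gives $p_{\max}\le \tilde O(\epsilon^{H_2(1/\beta)})$, whence $q=\tilde\Omega(\epsilon^{-H_2(1/\beta)})=\tilde\Omega((s/\epsilon)^{\beta H_2(1/\beta)/(\beta+1)})$ as required.

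I expect the main obstacle to be the tight estimate on $p_{\max}$: the key combinatorial move is the inequality $\binom{|A|}{d_I}\binom{|B|}{d_J}\le \binom{n}{d_I+d_J}$, which cleanly decouples the query-dependent numerator from the universal denominator $\binom{d_I+d_J}{d_I}$ and hence from any particular choice of $x$. A secondary delicacy is the choice of $C'$: it must simultaneously be $2\epsilon$-separated (so that $\epsilon$-learning forces exact identification) and concentrated enough that no single query reveals more than a $p_{\max}$-fraction of it, and the product form $\prod_{i\in I}x_i\prod_{j\in J}(1+x_j)$ with the specific split $(d_I,d_J)$ is precisely what balances these two requirements to produce the $H_2(\min(1/\beta,1/2))$ exponent.
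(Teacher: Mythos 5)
Your argument for $\beta\ge 2$ is correct and takes a genuinely different route from the paper's. The paper constrains $I\cup J$ to equal a \emph{fixed} $\lfloor\log(1/(2\epsilon))\rfloor$-element set (with $|J|$ ranging over $0,\dots,\lfloor\log s\rfloor$); distinct $f_{I_1,J_1},f_{I_2,J_2}$ then satisfy $f_{I_1,J_1}\cdot f_{I_2,J_2}=0$ pointwise, the sets $S_{I,J}$ of queries each function satisfies are disjoint, and an averaging count gives $p_{\max}=1/|C|$ for free, yielding the $\Omega(|C|)$ bound. You instead let $I,J$ be arbitrary disjoint subsets of $[n]$ of fixed sizes, lose disjointness, and compensate with the Vandermonde estimate $\binom{|A|}{d_I}\binom{|B|}{d_J}\le\binom{n}{d_I+d_J}$ together with the identity $\binom{n}{d_I}\binom{n-d_I}{d_J}=\binom{n}{d_I+d_J}\binom{d_I+d_J}{d_I}$ to bound $p_{\max}\le\binom{d_I+d_J}{d_I}^{-1}$ directly. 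Both arguments bottom out at the same binomial coefficient, but the mechanisms differ; your zero-answer-path formulation also makes the adaptive case explicit, whereas the paper starts from a ``non-adaptive'' $A_R$ and leaves that step implicit.

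The one real gap is the range $1<\beta<2$. There your fixed split $d_J=\lfloor\log s\rfloor$ forces $d_I/(d_I+d_J)\to 1-1/\beta<1/2$, so $\binom{d_I+d_J}{d_I}\approx 2^{\log(1/\epsilon)\,H_2(1-1/\beta)}$, which is strictly smaller than the $\Omega(1/\epsilon)$ the lemma requires in that range (since $H_2(\min(1/\beta,1/2))=1$ there). You appeal to ``the Bshouty--Goldreich bound,'' but that is an $\Omega(1/\epsilon)$ lower bound for \emph{testing}, not learning, and a testing lower bound does not imply a learning lower bound --- the standard reduction goes the other way. The paper's construction handles $1<\beta<2$ precisely because $|J|$ is allowed to be \emph{any} size at most $\lfloor\log s\rfloor$, so $|C|=\sum_{j\le\log s}\binom{\log(1/(2\epsilon))}{j}$, which is $\Omega(1/\epsilon)$ once the sum reaches the midpoint, i.e.\ when $\beta\le 2$. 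The cheapest patch to your argument is to set $d_J=\min(\lfloor\log s\rfloor,\lfloor\log(1/(2\epsilon))/2\rfloor)$ and $d_I=\lfloor\log(1/(2\epsilon))\rfloor-d_J$: this still satisfies $2^{d_J}\le s$, makes $d_I\approx d_J$ for $\beta<2$, and recovers $H_2(1/2)=1$.
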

\begin{proof} We first prove the lower bound for $\beta>1$. Let $t=\log(1/\epsilon)-\log s-1$ and $r=\log s$. 
Let $W$ be the set of all pairs $(I,J)$ where  $I$ and $J$ are disjoint sets, $I\cup J=[t+r]$, $|I|\ge t$ and $|J|=t+r-|I|\le r$. For every $(I,J)\in W$ define $f_{I,J}=\prod_{i\in I}x_i\prod_{j\in J}(1+x_j).$ Consider the set $C=\{f_{I,J}|(I,J)\in W\}$. First notice that $C\subset \P_{n,t+r,s}\subseteq \P_{n,s}$ and, by Lemma~\ref{weightofp}, $\Pr[f_{I,J}=1]\ge 2^{-(t+r)}=2^{-\log(1/\epsilon)+1}=2\epsilon$. Furthermore, since for $(I_1,J_1)\not=(I_2,J_2)$ the degree of $f_{I_1,J_1}+f_{I_2,J_2}$ is $\log(1/\epsilon)-1$, we also have
\begin{eqnarray}\label{IJ}
\Pr[f_{I_1,J_1}\not=f_{I_2,J_2}]\ge 2\epsilon.
\end{eqnarray}
Therefore, any learning algorithm for $\P_{n,s}$ (with accuracy $\epsilon$ and confidence $2/3$) is a learning algorithm for $C$ and thus is an exact learning algorithm for $C$. This is because, after learning an $\epsilon$-close formula $h$, by (\ref{IJ}), the closest function in $C$ to $h$ is the target function.  

Consider now a (randomized) non-adaptive exact learning algorithm $A_R$ for $C$ with probability of success at least $2/3$ and accuracy $\epsilon$. By Yao's minimax principle, there is a deterministic non-adaptive exact learning algorithm $A_D$ such that, for uniformly at random $f\in C$, with a probability at least $2/3$, $A_D$ returns $f$. We will show that $A_D$ must make more than $q=(1/10)|C|$ queries.
Now since,
\begin{eqnarray*}
|C|&=&\sum_{i=0}^{\log s} {\log\frac{1}{\epsilon}-2\choose i}\\
&\ge&\tilde \Omega\left( 2^{H_2\left(\min\left(\frac{\log s}{\log(1/\epsilon)},\frac{1}{2}\right)\right)\log(1/\epsilon)} \right)\\
&=&\tilde\Omega\left(\left(\frac{1}{\epsilon}\right)^{H_2(\min(1/\beta,1/2))} \right)=\tilde\Omega\left(\left(\frac{s}{\epsilon}\right)^{\frac{\beta\cdot H_2(\min(1/\beta,1/2))}{\beta+1}}\right)
\end{eqnarray*}
the result follows.

To this end, suppose for the contrary, $A_D$ makes $q$ queries. Let $S=\{a^{(1)},\ldots,a^{(q)}\}$ be the queries that $A_D$ makes. For every $(I,J)\in W$ let $S_{I,J}=\{a\in S|f_{I,J}(a)=1\}$. Since for any two distinct $(I_1,J_1),(I_2,J_2)\in W$ we have $f_{I_1,J_1}\cdot f_{I_2,J_2}=0$, the sets $\{S_{I,J}\}_{(I,J)\in W}$ are disjoint sets.

Let $f=f_{I',J'}$ be uniformly at random function in $C$. We will show that, with probability at least $4/5$, $A_D$ fails to learn $f$, which gives a contradiction. 
Since
$$E_{(I,J)\in W}[|S_{I,J}|]= \frac{\sum_{(I,J)\in W}|S_{I,J}|}{|W|}=\frac{q}{w}=\frac{1}{10},$$
at least $(9/10)|W|$ of the $S_{I,J}$ are empty sets. Therefore, with probability at least $9/10$, $S_{I',J'}$ is an empty set. In other words, with probability at least $9/10$, the answers to the all the queries are~$0$. If the answers to all the queries are zero, then with probability at most $1/10$, the algorithm can guess $I',J'$, and therefore, the failure probability of the algorithm is at least $4/5$. This proves the case $\beta>1$.

Now we prove the result for $0<\beta\le 1$. By Lemma~\ref{firstLB}, we get the lower bound
$\Omega(s)$. Since
$$s=\left(\frac{s}{\epsilon}\right)^{1/(\beta+1)}$$ and for $0<\beta\le 1$
$$\frac{1}{\beta+1}\ge \frac{\beta \cdot  H_2(\min(1/\beta,1/2))}{\beta+1}$$
the result follows.
\end{proof}

\section{An Improved Algorithm for Small $\beta$}\label{Smallb}
\begin{figure}[t]
\mybox{ {\bf LearnPoly$(f,\epsilon,s)$}\\ \hspace{.3in}
{\bf Input}: Black-box access to $f\in \P_{m,s}$\\
{\bf Output}: A hypothesis $h\in \P_{m,s}$ such that  $\Pr[h=f]\ge 1-\epsilon$.
\begin{enumerate}
    \item $h\gets 0$; $\ell\gets 0$;
    \item\label{repeat} Repeat $s$ times
    \item \hspace{.16in} $t\gets 0$; find$\gets False$;
    \item \hspace{.2in}\label{while} While $t<m:=\frac{8}{7\epsilon}\ln({128s})$ and (NOT find) Do
    \item  \hspace{.4in} $t\gets t+1$;
    \item  \hspace{.4in} \label{Drawa}Draw uniformly at random $a\in\{0,1\}^m$;
    \item \hspace{.4in} \label{fhaone} If $(f+h)(a)=1$ Then 
    \item \hspace{.59in}\label{llll} $\ell\gets\ell+1;$ If $\ell=v:=(64s)\ln(128s)$ Then Output $0$ and halt;
    \item \hspace{.6in}\label{fhaone2}  $a\gets$ {\bf FindMonomial}$((f+h)(a*x),\log(s/\epsilon)+3,1/(128s))$
    \item \hspace{.6in}\label{Wta} If $wt(a)\le \log(s/\epsilon)+3$ Then 
    \item \hspace{.85in}\label{lll} $a\gets$ {\bf FindMonomial}$((f+h)(a*x),\log(s/\epsilon)+3,1/(128s))$;
    \item \hspace{.8in} find$\gets True$;
    \item \hspace{.85in}\label{putM}   $h\gets h+\prod_{a_i=1}x_i$;
    \item \hspace{.2in} If (NOT find) Then Output $h$ and halt
\end{enumerate}}
\caption{A learning algorithm for $\P_{m,s}$.}
	\label{Alg2}
\end{figure}
In this section, we prove the following. 
\begin{theorem}\label{ThIm}
Let $\epsilon=1/s^\beta$. There is a proper learning algorithm for $s$-sparse polynomial with probability of success at least $2/3$ that makes
$$q_U'=\left(\frac{s}{\epsilon}\right)^{\gamma'(\beta)}+O\left(s\left(\log\frac{1}{\epsilon}\right)\log n\right)$$
queries and runs in time $O(q_U'\cdot n)$, where 
$$\gamma'(\beta)=\max\left(1,\frac{1}{\beta+1}+H_2\left(\min\left(\frac{1}{\beta+1},\frac{1}{2}\right)\right)\right).$$

In particular,
\begin{enumerate}
    \item The query complexity of this algorithm, $q_U'$ is better than the algorithm in Theorem~\ref{main01}, $q_U$, for $\beta<6.219$.
    \item $q_U'=\tilde O(s/\epsilon)$ for $\beta\ge 3.404$
    \item The query complexity $\min(q_U,q_U')$ is better than the best-known query complexity (which is $s^2/\epsilon=(s/\epsilon)^{(2+\beta)/(1+\beta)}$) for $\beta>1$ and is equal to $O(s^2/\epsilon)$ for $0\le \beta\le 1$.
\end{enumerate}
\end{theorem}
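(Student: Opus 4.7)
The plan is to analyze algorithm \textbf{LearnPoly} in Figure~\ref{Alg2} on $\P_{m,s}$ and then lift it to $\P_{n,s}$ via the Lemma~\ref{reduction2}-style projection-plus-binary-search reduction, which contributes exactly the additive $O(s\log(1/\epsilon)\log n)$ term. Setting $D := \log(s/\epsilon)+3$, I would decompose the target $f$ as $f = f_S + f_L$, where $f_S$ collects the monomials of size $\le D$ and $f_L$ collects the rest. A union bound gives $\Pr_x[f_L(x)=1] \le s\cdot 2^{-D-1} \le \epsilon/16$, so recovering $f_S$ exactly is sufficient for $\epsilon$-accuracy.

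For correctness, I would maintain two invariants: every monomial added to $h$ is a genuine monomial of $f_S$ (so $h \subseteq f_S$ throughout), enforced by the filter $\wt(a)\le \log(s/\epsilon)+3$ on line~\ref{Wta}; and the confidence $1/(128s)$ of each FindMonomial call, union-bounded over at most $v = 64s\ln(128s)$ calls allowed by the $\ell$-counter, gives total failure below $\ln(128s)/16$. The outer loop's exit with find=False corresponds to exhausting $m := (8/(7\epsilon))\ln(128s)$ uniform samples without a short-monomial witness; by $(1-p)^m \le e^{-pm}$ this certifies $\Pr_a[(f+h)(a)=1] \le (7/8)\epsilon$ with probability $\ge 1 - 1/(128s)$, so that $\Pr[h\ne f] \le \epsilon$. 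If instead all $s$ outer iterations each add a monomial, then $|h| = s$ forces $h = f_S$ (since $h \subseteq f_S$ and $|f_S|\le s$), again yielding $\Pr[h\ne f] \le \epsilon/16$. For the query complexity, the uniform samples in the while loops contribute at most $s\cdot m = \tilde O(s/\epsilon)$; the FindMonomial calls are capped at $v = \tilde O(s)$, each costing $O(2^{D\cdot H_2(\min((\log s+1)/D,\,1/2))}\, D\log(ms))$ by Lemma~\ref{FindMonomial}. Substituting $\epsilon = 1/s^\beta$ yields $D = (\beta+1)\log s + O(1)$, $(\log s+1)/D \to 1/(\beta+1)$, and $s = (s/\epsilon)^{1/(\beta+1)}$, so the FindMonomial aggregate becomes $(s/\epsilon)^{1/(\beta+1) + H_2(\min(1/(\beta+1),\,1/2))}$; taking the max with the $\tilde O(s/\epsilon)$ sampling cost gives $\gamma'(\beta) = \max(1,\,1/(\beta+1) + H_2(\min(1/(\beta+1),1/2)))$, as claimed.

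The main obstacle is showing the $\ell$-counter stays below $v$, i.e., bounding the total number of ``wasted'' positive examples in which FindMonomial returns a long monomial of $f_L$ that is then discarded. The key estimate is that such a wasted event requires some $M\in f_L$ to satisfy $M(a)=1$, and $\sum_{M\in f_L}\Pr_a[M(a)=1] \le s\cdot 2^{-D-1} \le \epsilon/16$. Combined with a lower bound $\Pr_a[(f+h)(a)=1] \ge \Omega(\epsilon)$ whenever $f_S - h\ne 0$ (obtained by a careful XOR/cancellation argument between $f_S - h$ and $f_L$, using Lemma~\ref{weightofp} applied to the short part $f_S - h$ of degree $\le D$), this shows each outer iteration contributes only $O(1)$ positive examples in expectation; a Chernoff concentration over the at most $s$ outer iterations then gives $\ell < v$ with high probability. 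Making this lower bound rigorous --- decoupling the short part $f_S - h$ from the high-degree tail $f_L$ under uniform sampling and ruling out pathological cancellations --- is the most delicate step of the proof.
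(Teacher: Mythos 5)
Your overall plan — run \textbf{LearnPoly} on $\P_{m,s}$, greedily collect all monomials of size at most $D=\log(s/\epsilon)+3$ into $h$, charge the two loops for the $\tilde O(s/\epsilon)$ sampling cost and the $s$ \textbf{FindMonomial} calls for the $(s/\epsilon)^{1/(\beta+1)+H_2(\cdot)}$ cost, then lift to $\P_{n,s}$ via projection plus variable identification for the additive $s\log(1/\epsilon)\log n$ — matches the paper, and your final arithmetic for $\gamma'(\beta)$ is right. But the step you yourself flag as delicate is where the argument actually breaks.

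The claimed lower bound $\Pr_a[(f+h)(a)=1]\ge\Omega(\epsilon)$ whenever $f_S-h\ne 0$ is false. Lemma~\ref{weightofp} applied to $f_S-h$, which has degree at most $D=\log(s/\epsilon)+3$, gives only $\Pr[(f_S-h)(x)=1]\ge 2^{-D}=\epsilon/(8s)$, and this is tight: if $f_S-h$ is a single monomial of degree $D$, then $\Pr[(f+h)(x)=1]\approx\epsilon/(8s)$, a factor $s$ below what you need. With such a small acceptance probability, the conditional probability $\Pr[\neg W(a)\mid F(a)=1]\le(\epsilon/8)/\Pr[F(a)=1]$ is no longer bounded by a constant, so ``$O(1)$ positives per outer iteration'' does not follow and the Chernoff argument you sketch does not close. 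The paper avoids proving any lower bound on $\Pr[F=1]$: it splits the count $N$ of executions of step~\ref{fhaone2} into $N_1$ (incurred while $\Pr[F=1]\ge\epsilon$, where the conditional waste probability is $\le 1/8$ and the generalized Chernoff bound on the dependent indicators gives $\E[N_1]\le 4s+1$) and $N_2$ (incurred while $\Pr[F=1]<\epsilon$, where the \emph{unconditional} probability of a positive per draw is $<\epsilon$, and the total number of draws across the whole run is at most $r=(8s/(7\epsilon))\ln(128s)$, giving $\E[N_2]\le\epsilon r=O(s\log s)$), then finishes with Markov rather than concentration. Your route needs this case split; the ``XOR/cancellation'' lower bound you propose as a substitute cannot be made true.

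A smaller but real issue: you union-bound the FindMonomial confidence $1/(128s)$ over all $v=64s\ln(128s)$ calls the $\ell$-counter allows, which gives $v/(128s)=\tfrac12\ln(128s)>1$, so this is not a valid probability bound. The paper instead union-bounds only over the at most $s$ calls to step~\ref{lll}, which are the ones that must succeed to add a new genuine monomial (failures in step~\ref{fhaone2} are absorbed into the $N$ accounting and the $\ell$-counter cutoff). Finally, in the $n$-variable lift you cannot invoke Lemma~\ref{relevantf} as in Lemma~\ref{reduction2}, because you have only an $\epsilon$-approximation $H$ of $G$, not $G$ itself; the paper replaces it with a \textbf{Test}-based search for a witness pair. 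This last point is a routine fix, but the first gap is substantive.
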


In the following table, we compare between $\gamma'(\beta)$ in Theorem~\ref{ThIm} with $\gamma(\beta)$ in Theorem~\ref{one}.

$$\begin{tabular}{|c|l|l|}
    \hline
        $\beta$  & $\gamma(\beta)$ & $\gamma'(\beta)$ \\
        \hline
        1& 2.617&1.5\\
        2 & 1.961&1.1252\\
        3& 1.582&1.1061\\
        4& 1.336&1\\
        5& 1.157&1\\
        6& 1.025&1\\
        7& 0.921&1\\
        8& 0.839&1\\
         9& 0.77&1\\
          10& 0.713&1\\
        \hline
    \end{tabular}$$

We first prove
\begin{lemma}\label{Iom}
Let $\epsilon=1/s^\beta$. There is a proper learning algorithm for $\P_{m,s}$ with probability of success at least $15/16$, makes
$$Q=\tilde O\left(\frac{s}{\epsilon}\right)+  \left(\frac{s}{\epsilon}\right)^{\frac{1}{\beta+1}+H_2\left(\min\left(\frac{1}{\beta+1},\frac{1}{2} \right)\right)(1+o_s(1))}\log m$$
queries, and runs in time $O(Q\cdot m)$.
\end{lemma}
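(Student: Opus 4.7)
The plan is to analyze \textbf{LearnPoly} with threshold degree $d:=\log(s/\epsilon)+3$, chosen so that $2^{-d-1}\le\epsilon/(16s)$. Let $h^{\ast}$ be the sum of the monomials of $f$ of degree at most $d$. Each monomial of $f$ of degree $>d$ evaluates to $1$ under the uniform distribution with probability $\le 2^{-d-1}$, so a union bound over the at most $s$ such monomials gives $\Pr_a[f(a)\ne h^{\ast}(a)]\le\epsilon/16$. Consequently, it suffices to show that with probability at least $15/16$ the output $h$ has its monomials contained in those of $h^{\ast}$ and satisfies $\Pr[(f+h)(a)=1]\le\epsilon$.

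The first building block is an invariant: assuming every line~11 call to \textbf{FindMonomial} succeeds, each monomial appended in line~13 is a genuine monomial of $f+h$ at the time of the call, hence of $f$. When line~11 executes, the line~10 check forces $\mathrm{wt}(a)\le d$, so $(f+h)(a*x)$ depends on at most $d$ variables and lies in $\P_{d,d,s}$; Lemma~\ref{FindMonomial} then returns a genuine monomial with probability $\ge 1-1/(128s)$, and a union bound over the at most $s$ additions caps the failure probability at $1/128$.

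The crux is to show that if the algorithm halts at line~14 with hypothesis $h$ then $\Pr[(f+h)(a)=1]\le\epsilon$, except with probability $\le 1/32$. I argue by contrapositive: fix an outer iteration and assume $\Pr[(f+h)(a)=1]>\epsilon$. For uniformly random $a$, let $E_{\mathrm{large}}$ be the event that some monomial $T$ of $f+h$ of degree $>d$ has $T(a)=1$; then $\Pr[E_{\mathrm{large}}]\le s\cdot 2^{-d-1}\le\epsilon/16$. On the complement of $E_{\mathrm{large}}$ no large monomial survives, so on the joint event $\{(f+h)(a)=1\}\cap\overline{E_{\mathrm{large}}}$ the restricted polynomial $(f+h)(a*x)$ is a nonzero element of $\P_{m,d,s}$; Lemma~\ref{FindMonomial} then ensures that both \textbf{FindMonomial} calls succeed with probability $1-O(1/s)$, the line~10 test passes, and a monomial is appended. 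Hence the per-draw success probability is at least $(\epsilon-\epsilon/16)(1-O(1/s))\ge\epsilon/2$ for large $s$, and the probability that none of the $M:=(8/(7\epsilon))\ln(128s)$ while-loop draws succeeds is at most $(1-\epsilon/2)^M\le(128s)^{-4/7}$. A union bound over the $\le s$ outer iterations keeps this failure below $1/32$.

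Finally, I rule out the halt at line~8 and read off the query bound. An \emph{unproductive positive}---a draw with $(f+h)(a)=1$ that fails to trigger line~13---forces either $E_{\mathrm{large}}$ (probability $\le\epsilon/16$ per draw) or a spurious \textbf{FindMonomial} outcome on an input in $\P_{m,d,s}$ (probability $\le 1/(128s)$ per call). Splitting each outer iteration by whether $p_1:=\Pr[(f+h)(a)=1]$ is above or below $\epsilon/8$---if $p_1\le\epsilon/8$ then at most $mp_1=O(\log s)$ positives occur in the whole iteration, while if $p_1>\epsilon/8$ the conditional probability that a positive is productive is $\ge 1/2$, so the while loop exits after $O(1)$ positives in expectation---gives expected total positives $O(s\log s)$, safely below $v=64s\ln(128s)$, and Chernoff yields $\Pr[\ell=v]\le 1/32$. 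For the queries, line~6 uses $s\cdot M=\tilde O(s/\epsilon)$ queries, and each of the at most $v=\tilde O(s)$ \textbf{FindMonomial} calls costs $\tilde O\bigl(2^{d\cdot H_2(\min((\lfloor\log s\rfloor+1)/d,1/2))}\log m\bigr)$ queries by Lemma~\ref{FindMonomial}. Substituting $d=(\beta+1)\log s+O(1)$ and $s=(s/\epsilon)^{1/(\beta+1)}$ converts the total \textbf{FindMonomial} contribution to $(s/\epsilon)^{1/(\beta+1)+H_2(\min(1/(\beta+1),1/2))(1+o_s(1))}\log m$, which together with the $\tilde O(s/\epsilon)$ sampling cost matches the stated $Q$; the time bound $O(Q\cdot m)$ follows since each query reads an $m$-bit assignment. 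The main obstacle I anticipate is the unproductive-positive bookkeeping: the per-draw union bounds alone are too loose, and the case split on $p_1$ is what keeps $\ell$ below $v$ with high probability.
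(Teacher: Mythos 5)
Your proposal follows the same overall architecture as the paper's proof: an invariant that every appended monomial is genuine (the paper's event $A_1$), a failure-to-find analysis for the while loop when $\Pr[f\ne h]>\epsilon$ (the paper's $A_2\vee A_3$), and a bound on the number of line-9 invocations (the paper's $A_4$), followed by the same query accounting. The case split on $p_1$ at threshold $\epsilon/8$ is a legitimate reframing of the paper's split of $N$ into $N_1$ (iterations with $\Pr[F=1]\ge\epsilon$) and $N_2$ (iterations with $\Pr[F=1]<\epsilon$), and your conditional-productivity argument ultimately gives the same $\E[N]=O(s\log s)$ estimate; the paper closes this with Markov, and Markov would also suffice here without invoking Chernoff.

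However, there is a quantitative gap in your $A_2$-step that, as written, does not close. You lower-bound the per-draw success probability by $\epsilon/2$, which gives a per-iteration failure probability of at most
$$\left(1-\tfrac{\epsilon}{2}\right)^{M}\le e^{-\frac{4}{7}\ln(128s)}=(128s)^{-4/7},$$
and you then apply a union bound over the $\le s$ outer iterations. But $s\cdot(128s)^{-4/7}=128^{-4/7}s^{3/7}\to\infty$, so the union bound does not yield $\le 1/32$ (it is already about $0.06$ at $s=1$). Your own intermediate estimate $(\epsilon-\epsilon/16)\bigl(1-O(1/s)\bigr)$ is tight enough to repair this: for $s$ large it is at least $\tfrac{7\epsilon}{8}$, which gives
$$\left(1-\tfrac{7\epsilon}{8}\right)^{M}\le e^{-\ln(128s)}=\frac{1}{128s},$$
and then the union bound over $s$ iterations gives $1/128\le 1/32$. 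This is exactly the constant the paper uses (it works with the bound $\Pr[E_{\mathrm{large}}]\le\epsilon/8$ directly). So the issue is not a wrong approach but a premature rounding of the constant; do not weaken $\tfrac{7\epsilon}{8}$ to $\tfrac{\epsilon}{2}$ before taking the $M$-th power.
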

\begin{proof} 
Consider the algorithm {\bf LearnPoly} in Figure~\ref{Alg2}. The algorithm uses the procedure {\bf FindMonomial} $(f,d,\delta)$ that, for $f\in\P_{n,d,s}$, with probability at least $1-\delta$, returns an assignment $a$ such that $f(a*x)=\Pi_{a_i=1}x_i$ is a monomial of $f$. If $f\not\in \P_{n,d,s}$ and $f\not=0$ then it returns an assignment $a$ such that $f(a*x)\not=0$, but, $\Pi_{a_i=1}x_i$ is not necessarily a monomial of $f$. Also, if $a$ satisfies a monomial of $f(x)$ of size at most $\log(s/\epsilon)+3$ and does not satisfy any monomial of size more than $\log(s/\epsilon)+3$ then {\bf FindMonomial}$(f(a*x),\log(s/\epsilon)+3,\delta)$, with probability at least $1-\delta$, returns an assignment that corresponds to a  monomial of size at most $\log(s/\epsilon)+3$. This is because, for such an $a$, $f(a*x)$ contains no monomials of size greater than $\log(s/\epsilon)+3$. See the algorithm {\bf FindMonomial} in Figure~\ref{Test_f} and Lemma~\ref{FindMonomial}.

The Repeat-loop in step~\ref{repeat} is executed $s$ times, and at each iteration, the algorithm, whp, either adds to $h$ a monomial of $f$ of size at most $\log(s/\epsilon)+3$ that is not in $h$ or detects that $\Pr[(f+h)(x)\not=1]=\Pr[f(x)\not= h(x)]\le \epsilon$. In the While-loop in step~\ref{while}, $h$, whp, contains some monomials of $f$. The algorithm  searches for an assignment $a$ that satisfies $f+h$, i.e., an assignment that satisfies a monomial of $f$ that is not in $h$. If such an assignment is found, step~\ref{fhaone2} uses the procedure
{\bf FindMonomial} to, whp, finds a new monomial of $f$ of size at most $\log(s/\epsilon)+3$. This procedure runs again in step~\ref{lll}. If such a monomial is found, it is added to $h$ in step~\ref{putM}.  

We call {\bf FindMonomial} twice because when in step~\ref{fhaone} a positive assignment $a$ is found for $f+h$, it may happen that $a$ satisfies some monomials of size more than $\log(s/\epsilon)+3$. In that case, we cannot guarantee that the first call returns a monomial of $f+h$ of size at most $\log(s/\epsilon)+3$. See the first paragraph in this proof.

Now, the algorithm may fail if one of the following events occurs.
\begin{enumerate}
\item $A_1$: In at least one of the (at most $s$) executions of {\bf FindMonomial} in step~\ref{lll}, the procedure fails to return an assignment $a$ such that $\prod_{a_i=1}x_i$ is a monomial of $f$ of size at most $\log (s/\epsilon)+3$.
\item $A_2$: In one of (the at most $s$) iterations in the Repeat-loop we have $\Pr[f(x)\not=h(x)]>\epsilon$ and for $m=(8/(7\epsilon))\ln(128s)$ assignments $a$ drawn uniformly at random, no one satisfies a monomial of $F(x):=f(x)+h(x)$ of size at most $\log(s/\epsilon)+3$ and does not satisfy any monomial of size more than $\log(s/\epsilon)+3$. 

Notice that when $A_2$ occurs, the variable ``find'' remains ``False'', and the algorithm returns $h$ that is $\epsilon$-far from $f$. 

\item $A_3$: For one of the assignments $a$ that satisfies a monomial of $F(x):=f(x)+h(x)$ of size at most $\log(s/\epsilon)+3$ and does not satisfy any monomial of $F$ of size more than $\log(s/\epsilon)+3$, {\bf FindMonomial} in step~\ref{fhaone2} fails to output an assignment of weight at most $\log(s/\epsilon)+3$. 
\item $A_4$: The command {\bf FindMonomial} in step~\ref{fhaone2} runs more than $v$ times. Notice that the variable $\ell$ counts the number of times that the command {\bf FindMonomial} in step~\ref{fhaone2} runs. If $\ell=v$ then the algorithm outputs an arbitrary $h=0$.
\end{enumerate} 

Consider the event $A_1$. Let $F=f+h$. Consider steps~\ref{Wta}-\ref{putM}. The assignment $a$ satisfies $F(a*x)\not=0$. This follows from steps~\ref{fhaone} and~\ref{fhaone2}. Also, by step~\ref{Wta}, $wt(a)\le \log(s/\epsilon)+3$ and therefore $F(a*x)$ has at most $\log(s/\epsilon)+3$ relevant variables and all its monomials are of size at most $\log(s/\epsilon)+3$. By Lemma~\ref{FindMonomial}, with probability at least $1-1/(128s)$, {\bf FindMonomial}$(F(a*x),\log(s/\epsilon)+3,1/(128s))$ returns an assignment $b$ such that $M=\prod_{b_i=1}x_i$ is a monomial of $F(a*x)$. Since the monomials of $F(a*x)$ are monomials of $F(x)$, the claim follows. This is for one call to {\bf FindMonomial}. Since each time we call {\bf FindMonomial}, with probability at least $1-1/(128s)$, we find a new monomial of $f$, and since this command runs at most $s$ time, we have $\Pr[A_1]\le 1/128$.

Consider the events $A_2$ and $A_3$. In step~\ref{Drawa}, the algorithm draws an assignment $a$ uniformly at random. The probability that $F(a)=1$ is at least $\epsilon$. The probability that $a$ satisfies at least one of the monomials in $F$ of size more than $\log(s/\epsilon)+3$ is at most $s2^{-\log(s/\epsilon)+3}\le \epsilon/8$. Therefore, the probability that $F(a)=1$ and $a$ does not satisfy any one of the monomials of size more than $\log(s/\epsilon)+3$ is at least $7\epsilon/8$. Thus, with probability at least $7\epsilon/8$, $F(a)=1$ and $F(a*x)$ is of degree $\log(s/\epsilon)+3$ polynomial. The probability that the algorithm fails to find such an $a$ in the While-loop is
$$(1-7\epsilon/8)^{\frac{8}{7\epsilon}\ln{128s}}\le\frac{1}{128s}.$$
Once such an $a$ is found, by Lemma~\ref{FindMonomial}, with probability at least $1-1/(128s)$, {\bf FindMonomial}$(F(a*x),\log(s/\epsilon)+3,1/(128s))$ in step~\ref{fhaone2} returns an assignment $a$ of weight at most $\log(s/\epsilon)+3$. In that case, the algorithm adds a monomial to $h$, finishes the While-loop, and returns to the Repeat-loop. Therefore, the probability of $A_2\vee A_3$ is at most $1/64$.

Before we consider $A_4$, we will first assume that the events $\bar A_1, \bar A_2$, and $\bar A_3$ occur and find an upper bound for the expected number of times that step~\ref{fhaone2} is executed.

Let $N$ be a random variable representing the number of times that step~\ref{fhaone2} is executed. Let $N_1$ and $N_2$ be the number of times it is executed while $\Pr[F(x)=1]\ge \epsilon$ and, $\Pr[F(x)=1]< \epsilon$, respectively. Obviously, $\E[N]=\E[N_1]+\E[N_2]$. We will now upper bound $\E[N_1]$ and $\E[N_2]$.

Consider the case when $\Pr[F(x)=1]\ge \epsilon$. It is well known (and easy to prove) that
$$\E[N_1]=\sum_{\nu=1}^\infty \Pr[N_1\ge \nu].$$
Let $X_i$, $i\ge 1$, be an indicator random variable that is equal to $0$ if, at the $i$th time that steps~\ref{llll}-\ref{Wta} are executed  (i.e., $i$th time that $F(a)=1$ in step~\ref{fhaone}), both commands in steps~\ref{fhaone2} and~\ref{lll} are executed and $1$ if only the command in step~\ref{fhaone2} is executed. Let $W(a)$ be the event that, for a uniform at random assignment $a$, no monomial of $f$ of size more than $\log(s/\epsilon)+3$ satisfies $a$.
If $F(a)=1$ and $W(a)$ occurs then, assuming $\bar A_1, \bar A_2$ and $\bar A_3$ occur, the command in step~\ref{fhaone2} outputs an assignment of weight at most $\log(s/\epsilon)+3$ and then the command in step~\ref{lll} is also executed. Therefore,
\begin{eqnarray*}
\E[X_i]&\le& 1- \Pr[W(a)|F(a)=1]\\
&\le& \Pr[\neg W(a)|F(a)=1]\\
&\le & \frac{\Pr[\neg W(a)]}{\Pr[F(a)=1]}\\
&\le & \frac{s2^{-(\log(s/\epsilon)+3)}}{\epsilon}=\frac{1}{8}.\\
\end{eqnarray*}

Notice that this bound is independent of $X_{i-1},\ldots,X_1$. That is, $\Pr[X_i=1|X_{i-1},\ldots,X_1]\le 1/8$.
Since the command in step~\ref{lll} is executed at most $s$ times, the event $N_1\ge \nu$ implies the event $X_1+\cdots+X_\nu\ge \nu-s$. By the generalized Chernoff's bound,~\cite{PanconesiS97,ImpagliazzoK10}, for $\nu\ge 4s$, we have
$$\Pr[N_1\ge \nu]\le \Pr\left[X_1+\cdots+X_\nu\ge \frac{\nu-s}{\nu}\nu\right]\le e^{-\nu D((\nu-s)/\nu\|1/8)}\le e^{-2\nu((\nu-s)/\nu-1/8)^2}\le e^{-\nu/2}.$$
Therefore 
$$\E[N_1]=\sum_{\nu=1}^\infty \Pr[N_1\ge \nu]\le 4s+\sum_{\nu=4s+1}^\infty e^{-\nu/2}\le 4s+1.$$

To upper bound $\E[N_2]$, consider now the case when $\Pr[F(x)=1]<\epsilon$. Since the Repeat-loop with the While-loop runs at most $r=(8s/(7\epsilon))\log(128s)$ times and in each iteration the probability that step \ref{fhaone2} is executed is $\Pr[F(x)=1]<\epsilon$, the expected number of times the algorithm executes the command in step \ref{fhaone2} is at most
$$\E[N_2]=\epsilon r=\frac{8s}{7}\log ({128s}).$$
Therefore,
$$\E[N]\le \frac{8s}{7}\log({128s})+4s+1.$$
Therefore, By Markov's bound
$$\Pr[A_4]=\Pr\left[N\ge 64s\ln(128s)\right]\le \frac{1}{32}.$$

Now
\begin{eqnarray*}
\Pr[A_1\vee A_2\vee A_3\vee A_4]&\le & \Pr(A_1)+\Pr(A_2|\bar A_1)+\Pr(A_3|\bar A_1\wedge \bar A_2)+\Pr(A_4|\bar A_1\wedge \bar A_2\wedge \bar A_3)\\
&\le& 1/16.
\end{eqnarray*}
This completes the correctness of the algorithm.

Now for the query complexity of the algorithm, we have the following.
\begin{enumerate}
    \item\label{QC01} The query complexity of step~\ref{fhaone} is $\tilde O(s/\epsilon)$.
    
    This is because we have two loops in the algorithm. The Repeat-loop runs at most $s$ iterations, and the While-loop at most $(8/(7\epsilon))\ln(128s)$ iterations. 
    \item\label{QC02} The query complexity of step~\ref{lll} is $$q:=\left( \left(\frac{s}{\epsilon}\right)^{\frac{1}{\beta+1}+H_2\left(\min\left(\frac{1}{\beta+1},\frac{1}{2} \right)\right)(1+o_s(1))}\log m\right)$$
    
This follows from the fact that  step~\ref{lll} is executed at most $s$ times. 
By Lemma~\ref{FindMonomial}, for $d=\log(s/\epsilon)+3$, the query complexity of this step is
$$q=O\left(s2^{dH_2\left(\min\left(\frac{\lfloor \log s\rfloor+3}{d},\frac{1}{2}\right)\right)}d\log m\right)=\left( \left(\frac{s}{\epsilon}\right)^{\frac{1}{\beta+1}+H_2\left(\min\left(\frac{1}{\beta+1},\frac{1}{2} \right)\right)(1+o_s(1))}\log m\right).$$
    \item\label{QC03} The query complexity of step~\ref{fhaone2} is $v=\tilde O(q)$.
\end{enumerate}
This completes the proof. 
\end{proof}


We are now ready to prove Theorem~\ref{ThIm}.
\begin{proof}
 The algorithm first takes a $(1-1/(64s\log(2s/\epsilon)))$-zero projection $g=f(z_1,\ldots,z_n)$ of $f$. The probability that some monomial of $f$ of size at most $\log(2s/\epsilon)$ is not of $g$ is less than
$$1-\left(1-\frac{1}{64s\log(2s/\epsilon)}\right)^{s\log(2s/\epsilon)}\le \frac{1}{64}.$$
The probability that $g$ has a monomial of size $d:=64s\log(2s/\epsilon)\ln (64s)$ is at most
$$s\left(1-\frac{1}{64s\log(2s/\epsilon)}\right)^{64s\log(2s/\epsilon)\ln (64s)}\le se^{-\ln(64s)}\le \frac{1}{64}.$$
Therefore, with probability at least $31/32$, $g$ is a degree-$d$ $s$-sparse polynomial and
\begin{eqnarray}\label{fineq}
\Pr[f(x)\not=g(x)]\le se^{-\log(2s/\epsilon)}= \frac{\epsilon}{2}.
\end{eqnarray}
In particular, with probability at least $31/32$, $g$ contains at most $ds=64s^2\log(2s/\epsilon) \ln(64s)$ relevant variables.

The algorithm then continues as in Lemma~\ref{reduction2}. It draws uniformly at random a map $\phi:[n]\to [m]$ where $m=16(ds)^2$, and defines $G(x_1,\ldots,x_{m})=g(x_{\phi(1)},\ldots,x_{\phi(n)})$ and then learns $G$ using the algorithm in Lemma~\ref{Iom} with accuracy $\epsilon/2$. The probability that different relevant variables of $f$ are mapped by $\phi$ into different variables of $G$ is 
\begin{eqnarray}\label{klk}
1-\frac{(ds)^2}{2m}=\frac{31}{32}.
\end{eqnarray}

The algorithm in Lemma~\ref{Iom} learns a hypothesis $H(x_1,\ldots,x_m)$ that contains some of the monomials of size at most $\log(2s/\epsilon)+3$ of $G$ and, with probability at least $15/16$, $\Pr[H\not=G]\le \epsilon/2$. 

Then for every relevant variable $x_i$ of $H$, $i\in [m]$ it finds two assignments $a,b\in \{0,1\}^m$ that differ only in the $i$th coordinate and $G(a)\not= G(b)$. Here, we cannot use Lemma~\ref{relevantf} (as we did in Lemma~\ref{reduction2}) because we have not learned $G$ but $H$ that is $\epsilon/2$-close to $G$. To find such assignments, we take any monomial $M=x_{i_1}\cdots x_{i_r}$ of $H$ that contains $x_i$. Let $G_M$ be $G$ where we substitute zero in every variable $x_j$, $j\not\in\{i_1,\ldots,i_r\}$. Since the monomials of $H$ are also monomials of $G$, then $G_M\not=0$ and $x_i$ is a relevant variable of $G_M$. Also, since $H$ contains only monomials of degree $\log(2s/\epsilon)+3$, $G_M$ does too. Then to get the two assignments, we run {\bf Test}$(G_M,G_M',1/(32m))$ where $G_M'$ is $G_M$ with the substitution of $x_i=0$.

Then using the algorithm in Lemma~\ref{Trivvar}, we learn $g(\pi^{(i,a)})$ where
$$\pi^{(i,a)}_j=\left\{\begin{array}{ll} a_{\phi(j)}& \phi(j)\not=i \\ x_j& \phi(j)=i\end{array} \right. .$$
Then we return $H(y_1,\ldots,y_m)$ where $y_i=x_j$ if $x_i$ is a relevant variable of $G$, and $g(\pi^{(i,a)})\in \{x_j,\bar x_j\}$ and $y_i=0$ otherwise. We now claim that, with probability at least $15/16$,
\begin{eqnarray}\label{sineq}
\Pr[H(y_1,\ldots,y_m)=g(x)]\le \epsilon/2.
\end{eqnarray}
The proof of this claim is the same as Lemma~\ref{reduction2}.

Now, by (\ref{fineq}), (\ref{klk}), (\ref{sineq}), and the probability of success at least $31/32$ of the {\bf Test} calls, with probability at least $1-(1/16+1/32+1/32+1/32)>2/3$ we have  $\Pr[H(y_1,\ldots,y_m)=f(x)]\le \epsilon.$

Now for the query complexity. By Lemma~\ref{Iom}, the query complexity of learning $G$ is 
\begin{eqnarray}\label{www1}
Q=\tilde O\left(\frac{s}{\epsilon}\right)+  \left(\frac{s}{\epsilon}\right)^{\frac{1}{\beta+1}+H_2\left(\min\left(\frac{1}{\beta+1},\frac{1}{2} \right)\right)(1+o_s(1))}\log m=\tilde O\left(\frac{s}{\epsilon}\right)+  \left(\frac{s}{\epsilon}\right)^{\frac{1}{\beta+1}+H_2\left(\min\left(\frac{1}{\beta+1},\frac{1}{2} \right)\right)(1+o_s(1))}.
\end{eqnarray}
We now find the query complexity of finding the relevant variables. The number of relevant variables of $H$ is at most $v=(\log(2s/\epsilon)+3)s$. For each variable we run {\bf Test} that by, Lemma~\ref{zerotest2}, takes 
$$q=2^{H_2\left(\min\left(\frac{\lfloor \log s\rfloor+1}{\log(2s/\epsilon)+3}\right),\frac{1}{2}\right)(\log(2s/\epsilon)+3)}\ln{(32m)}=\left(\frac{s}{\epsilon}\right)^{H_2\left(\min\left(\frac{1}{\beta+1},\frac{1}{2} \right)\right)(1+o_s(1))}$$ queries. So the total number of queries for all the calls is
\begin{eqnarray}\label{www2}
qv=\left(\frac{s}{\epsilon}\right)^{\frac{1}{\beta+1}+H_2\left(\min\left(\frac{1}{\beta+1},\frac{1}{2} \right)\right)(1+o_s(1))}.
\end{eqnarray}
Then the query of the searches is
\begin{eqnarray}\label{www3}
v\log n= \tilde O(s)\left(\log\frac{1}{\epsilon}\right)\log n.
\end{eqnarray}
By (\ref{www1}), (\ref{www2}), and (\ref{www3}), the result follows.
\end{proof}

\bibliography{TestingRef}

\end{document}